\newcommand{\D}{\mathcal{D}}
\newcommand{\Proba}{\mathbb{P}}
\newcommand{\R}{\mathbb{R}}
\newcommand{\E}{\mathbb{E}}
\newcommand{\x}{\mathbf{x}}
\newcommand{\y}{\mathbf{y}}
\newcommand{\balpha}{\boldsymbol{\alpha}}
\newcommand{\btheta}{\boldsymbol{\theta}}
\newcommand{\calH}{\mathcal{H}}
\newcommand{\calX}{\mathcal{X}}
\newtheorem{theorem}{Theorem}
\newtheorem*{theorem*}{Theorem}
\newtheorem{lemma}{Lemma}
\title{A Principled Approach for Learning Task Similarity in Multitask Learning}
\author{
Changjian Shui$^1$\and
Mahdieh Abbasi$^1$\and
Louis-Émile Robitaille$^{1}$\and
Boyu Wang$^2$\and
Christian Gagn\'e$^{1}$
\\
\affiliations
$^1$Universit\'e Laval, 
$^2$ University of Pennsylvania\\
\emails
\{changjian.shui.1, mahdieh.abbasi.1,louis-emile.robitaille.1\}@ulaval.ca,\\ boyuwang@seas.upenn.edu, 
christian.gagne@gel.ulaval.ca
}
\begin{document}

\maketitle

\begin{abstract}
Multitask learning aims at solving a set of related tasks simultaneously, by exploiting the shared knowledge for improving the performance on individual tasks. Hence, an important aspect of multitask learning is to understand the similarities within a set of tasks. Previous works have incorporated this similarity information explicitly (e.g., weighted loss for each task) or implicitly (e.g., adversarial loss for feature adaptation), to achieve good empirical performances. However, the theoretical motivations for adding task similarity knowledge are often missing or incomplete. In this paper, we give a different perspective from a theoretical point of view to understand this practice. We first provide an upper bound on the generalization error of multitask learning, showing the benefit of explicit and implicit task similarity knowledge. We systematically derive the bounds based on two distinct task similarity metrics: $\calH$ divergence and Wasserstein distance. From these theoretical results, we revisit the Adversarial Multitask Neural Network, proposing a new training algorithm to learn the task relation coefficients and neural network parameters iteratively. We assess our new algorithm empirically on several benchmarks, showing not only that we find interesting and robust task relations, but that the proposed approach outperforms the baselines, reaffirming the benefits of theoretical insight in algorithm design.
\end{abstract}

\section{Introduction}
Traditional machine learning mainly focused on designing learning algorithms for individual problems. While significant progress has been achieved in applied and theoretical research, it still requires a large amount of labelled data in such a context to obtain a small generalization error. In practice, this can be highly prohibitive, e.g., for modelling users’ preferences for products \cite{murugesan2017active}, for classifying multiple objects in computer vision \cite{long2017learning}, or for analyzing patient data in computational healthcare \cite{wang2015online}. In the multitask learning (MTL) scenario, an agent learns the shared knowledge between a set of related tasks. Under different assumptions on task relations, MTL has been shown to reduce the amount of labelled examples required per task to reach an acceptable performance.       

Understanding the theoretical assumptions of the tasks relationship plays a key role in designing a good MTL algorithm. In fact, it determines which \emph{inductive bias} should be involved in the learning procedure. Recently, there are many successful algorithms that rely on task similarity information, which assumes the \emph{Probabilistic Lipschitzness} (PL) condition \cite{urner2013probabilistic} as the inductive bias. For instance, \cite{murugesan2017active,murugesan2016adaptive,pentina2017multi} minimize a weighted sum of empirical loss in which similar tasks are assigned higher weights. These approaches explicitly estimate the task similarities through a linear model. Since these approaches are estimated in the original input space, it is difficult to handle the \emph{covariate shift} problem. Therefore, many neural network based approaches started to explore tasks similarities implicitly: \cite{liu2017adversarial,li2018extracting} use adversarial losses by feature adaptation, minimizing the distribution distance between the tasks to construct a shared feature space. Then, the hypothesis for the different tasks are learned over this adapted feature space.  

The implicit similarity learning approaches are inspired from the idea of Generative Adversarial Networks (GANs) \cite{goodfellow2014generative}. However, the fundamental implications of incorporating task similarity information in MTL algorithms are not clear. The two main questions are \emph{why} should we combine explicit and implicit similarity knowledge in the MTL framework and \emph{how} can we properly do it.        

Previous works either consider explicit or implicit similarity knowledge separately, or combine them heuristically in some specific applications. In contrast, the main goal of our work is to give a rigorous analysis of the benefits of task similarities and derive an algorithm which properly uses this information. We start by deriving an upper bound on the generalization error of MTL under different similarity metrics (or adversarial loss). These bounds show the motivation behind the use of adversarial loss in MTL, that is to control the generalization error. Then, we derive a new procedure to update the relationship coefficients from these theoretical guarantees. This procedure allows us to bridge the gap between the explicit and implicit similarities, which have been previously seen as disjointed or treated heuristically. We then derive a new algorithm to train the Adversarial Multitask Neural Network (AMTNN) and validate it empirically on two benchmarks: digit recognition and Amazon sentiment analysis. The results show that our method not only highlights some interesting relations, but also outperforms the previous baselines, reaffirming the benefits of theory in algorithm design.

\section{Related Work}

\paragraph{Multitask learning (MTL)} A broad and detailed presentation of the general MTL is provided in some survey papers \cite{zhang2017survey,ruder2017overview}. More specifically related to our work, we note several practical approaches that use tasks relationship to improve empirical performances: \cite{zhang2010convex} solve a convex optimization problem to estimate tasks relationships, while \cite{long2017learning,kendall2018multi} propose probabilistic models through construction of a task covariance matrix or estimate the multitask likelihood from a deep Bayes model. On the theoretical side, \cite{murugesan2016adaptive,murugesan2017active,pentina2017multi} analyze the weighted sum loss algorithm and its applications in the online, active and transductive scenarios. Moreover, \cite{maurer2016benefit} analyze the generalization error of representation-based approaches, and \cite{zhang2015multi} analyze the algorithmic stability in MTL.

\paragraph{Similarity metrics and adversarial loss} The \emph{similarity metric} (or distribution divergence) is widely used in deep generative models \cite{goodfellow2014generative,arjovsky2017wasserstein}, domain adaptation \cite{ben2010theory,ganin2016domain,redko2017theoretical}, robust learning \cite{konstantinov2019robust}, and meta-learning \cite{rakotomamonjy2018wasserstein}. In transfer learning, adversarial losses are widely used for feature adaptation, since the transfer procedure is much more efficient on a shared representation. In applied transfer learning, $\calH$ divergence \cite{ganin2016domain} and Wasserstein distance \cite{li2018extracting} are widely used in the divergence metric. 
As for MTL applications, \cite{liu2017adversarial} and \cite{kremer2018inductive} apply $\calH$-divergence in natural language processing for text classification and speech recognition, while \cite{janati2018wasserstein} are the first to propose the use of Wasserstein distance to estimate the similarity of linear parameters instead of the data generation distributions. As for the theoretical understanding, \cite{lee2018minimax} analyzes the minimax statistical property in the Wasserstein distance.

\section{Preliminaries}
Considering a set of $T$ tasks $\{\hat{\D}_t\}_{t=1}^T$, in which the observations are generated by the underlying distribution $\D_{t}$ over $\mathcal{X}$ and the real target is determined by the underlying labelling functions $f_{t}:\mathcal{X}\to\mathcal{Y}$ for $\{(\D_t,f_t)\}_{t=1}^T$. Then, the goal of MTL is to find $T$ hypothesis: $h_1,\dots,h_T$ over the hypothesis space $\mathcal{H}$ to control the average expected error of all the tasks: 
\begin{equation*}
\frac{1}{T} \sum_{t=1}^T R_t(h_t),
\end{equation*}
where $R_t(h_t) \equiv R_t(h_t,f_t) = \E_{\x\sim\D_t} \ell(h_t(\x),f_t(\x))$ is the expected risk at task $t$ and $\ell$ is the loss function. Throughout the theoretical part, the loss is $\ell(h_t(\x),f_t(\x)) = |h_t(\x)-f_t(\x)|$, which is coherent with \cite{pentina2017multi,li2018extracting,ben2010theory,ganin2016domain,redko2017theoretical}. If $h,f$ are the binary mappings with output in $\{-1,1\}$, it recovers the typical zero-one loss. 

We also assume that each task has $m_t$ examples, with $\sum_{t=1}^T m_t = m$ examples in total. Then for each task $t$, we consider a minimization of weighted empirical loss for each task. That means we define a simplex $\balpha_t\in \Delta^T = \{\balpha_{t,i}\geq 0,~ \sum_{i=1}^T \balpha_{t,i} = 1 \}$ for the corresponding weight for task $t$. Then the weighted empirical error w.r.t. the hypothesis $h$ for task $t$ can be written as:
\begin{equation*}
\hat{R}_{\balpha_t}(h) = \sum_{i=1}^T \balpha_{t,i} \hat{R}_{i}(h),
\end{equation*}
where $\hat{R}_{i}(h) = \frac{1}{m_i}\sum_{j=1}^{m_i} \ell(h(\x_j),\y_j) $ is the average empirical error for task $i$. 

\section{Similarity Measures}
As we illustrated in the previous section, we are interested in task similarity metrics in MTL. Therefore, the first element to determine is how to measure the similarity between two distributions. For this, we introduce two metrics: $\mathcal{H}$-divergence \cite{ben2010theory} and Wasserstein distance \cite{arjovsky2017wasserstein}, which are widely used in machine learning.  

\paragraph{$\calH$-divergence}
Given an input space $\mathcal{X}$ and two probability distributions $\D_i$ and $\D_{j}$ over $\mathcal{X}$, let $\mathcal{H}$ be a hypothesis class on $\mathcal{X}$. We define the $\calH$-divergence of two distributions as
\begin{equation*}
	d_{\calH} (\D_i,\D_j) =  \sup_{h,h^{\prime}\in\calH} |R_{i}(h,h^{\prime})-R_{j}(h,h^{\prime})|.
\end{equation*}
The empirical $\calH$-divergence corresponds to: 
\begin{equation*}
    d_{\calH}(\hat{\D}_i,\hat{\D}_j) =  \sup_{h,h^{\prime}\in\calH} |\hat{R}_{i}(h,h^{\prime})-\hat{R}_{j}(h,h^{\prime})|.
\end{equation*}

\paragraph{Wasserstein distance}
We assume $\calX$ is the measurable space and denote $\mathcal{P}(\calX)$ as the set of all probability measures over $\calX$. Given two probability measures $\D_i\in\mathcal{P}(\calX_1)$ and $\D_j \in\mathcal{P}(\calX_2)$, the \emph{optimal transport} (or Monge-Kantorovich) problem can be defined as searching for a probabilistic coupling $\gamma$ refined as a joint probability measure over $\calX_1 \times \calX_2$  with marginals $\D_i$ and $\D_j$ for all $\x,\y$ that are minimizing the cost of transport w.r.t. some cost function $c$:
\begin{align*}
& \mathrm{argmin}_{\gamma} \int_{\calX_1 \times \calX_2} c(\x,\y)^p d\gamma(\x,\y),\\
& \mathrm{s.t.}\quad \mathbf{P}^{\calX_1} \#\gamma = \D_i;\quad \mathbf{P}^{\calX_2} \#\gamma = \D_j,
\end{align*}
where $\mathbf{P}^{\calX_1}$ is the projection over $\calX_1$ and $\#$ denotes the push-forward measure. The Wasserstein distance of order $p$ between $\D_i$ and $\D_j$ for any $p \geq 1$ is defined as:
\begin{equation*}
W_p^p(\D_i,\D_j) = \inf_{\gamma\in \Pi(\D_i,\D_j)} \int_{\calX_1 \times \calX_2} c(\x,\y)^p d \gamma(\x,\y),
\end{equation*}
where $c:\calX\times\calX \to \R^{+}$ is the cost function of transportation of one unit of mass $\x$ to $\y$ and $\Pi(\D_i,\D_j)$ is the collection of all joint probability measures on $\calX \times \calX$ with marginals $\D_i$ and $\D_j$. Throughout this paper, we only consider the case of $p=1$, i.e., Wasserstein-1 distance.

\section{Theoretical Guarantees}
Based on the definitions of the distribution similarity metric, we are demonstrating that the generalization error in the multitask learning can be upper bounded by the following result:
\begin{theorem}
Let $\calH$ be a hypothesis family with a VC-dimension $d$. If we have $T$ tasks generated by the underlying distribution and labelling function $\{(\D_1,f_1),\ldots,(\D_T,f_T)\}$ with observation numbers $m_1,\dots,m_T$. If we adopt the $\mathcal{H}$ divergence as a similarity metric, then for any simplex $\balpha_{t}\in\Delta^T$, and for $\delta\in(0,1)$, with probability at least $1-\delta$, for $h_1,\dots,h_T \in\mathcal{H}$, we have:
\begin{align*}
    & \frac{1}{T}\sum_{t=1}^T R_t(h_t)  \leq  \underbrace{\frac{1}{T}\sum_{t=1}^T \hat{R}_{\balpha_t}(h_t)}_\text{Weighted empirical loss}+ \underbrace{C_1  \sum_{t=1}^T \Big( \sqrt{ \sum_{i=1}^T \frac{\balpha^2_{t,i}}{\beta_i}} \Big)}_\text{Coefficient regularization}\\
    & + \underbrace{\frac{1}{T}\sum_{t=1}^T \sum_{i=1}^T \balpha_{t,i} d_{\calH}(\hat{\D}_t,\hat{\D}_i)}_\text{Empirical distribution distance} + \underbrace{C_2 + \frac{1}{T}\sum_{t=1}^T\sum_{i=1}^T \balpha_{t,i}\lambda_{t,i}}_\text{Complexity \& optimal expected loss},
\end{align*}
where $\beta_i = \frac{m_i}{m}$, $C_1 = 2 \sqrt{ \frac{2(d\log(\frac{2em}{d})+\log(\frac{16T}{\delta}))}{m}}$ and $C_2 =2\min_{i,j} \sqrt{\frac{2d\log(2m_{i,j})+ \log(32T/\delta)}{m_{i,j}}}$ with $m_{i,j} = \min\{ m_i,m_j\}$ and $\lambda_{i,j} = \inf_{h\in\calH} \{R_i(h) + R_j(h)\}$ (joint expected minimal error w.r.t. $\calH$).
\end{theorem}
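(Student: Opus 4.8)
The plan is to bound the average expected risk by introducing, for each task $t$, the weighted combination $\hat{R}_{\balpha_t}(h_t)$ as an intermediate quantity and controlling the error of this substitution term by term. The first step is to pass from the true risk $R_t(h_t)$ to the weighted \emph{expected} risk $R_{\balpha_t}(h_t) = \sum_{i=1}^T \balpha_{t,i} R_i(h_t)$. Writing $R_t(h_t) = \sum_i \balpha_{t,i} R_t(h_t)$ (since the simplex weights sum to one), the gap $|R_t(h_t) - R_i(h_t)|$ for each $i$ must be decomposed. The key algebraic device here is the triangle-type inequality for the $\ell_1$ loss: for any two distributions one has $|R_t(h) - R_i(h)| \le |R_t(h,h^\ast) - R_i(h,h^\ast)| + (\text{terms involving } f_t, f_i)$, which naturally produces the $\calH$-divergence $d_{\calH}(\D_t,\D_i)$ together with the joint optimal error $\lambda_{t,i} = \inf_{h}\{R_t(h)+R_i(h)\}$. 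This is the standard domain-adaptation decomposition of Ben-David et al., applied here pairwise and then averaged with the weights $\balpha_{t,i}$.

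The second step is to replace the expected weighted risk $R_{\balpha_t}(h_t)$ and the population $\calH$-divergence $d_{\calH}(\D_t,\D_i)$ by their empirical counterparts $\hat{R}_{\balpha_t}(h_t)$ and $d_{\calH}(\hat{\D}_t,\hat{\D}_i)$. Both are uniform-convergence statements over the class $\calH$ (for the divergence, over pairs $h,h'\in\calH$, so effectively over $\calH\times\calH$, which still has VC dimension $O(d)$). For the weighted empirical risk I would apply a VC / Rademacher concentration bound, but crucially the deviation is controlled for the \emph{weighted} sum $\sum_i \balpha_{t,i}\hat{R}_i(h_t)$: since the $i$th term averages $m_i = \beta_i m$ samples, the variance of the weighted estimator scales like $\sum_i \balpha_{t,i}^2/(\beta_i m)$, which is exactly where the coefficient-regularization factor $\sqrt{\sum_i \balpha_{t,i}^2/\beta_i}$ arises. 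Carrying the VC growth-function term $d\log(2em/d)$ and the union-bound penalty $\log(16T/\delta)$ through a Hoeffding/McDiarmid argument yields the constant $C_1$. The empirical-divergence convergence gives the constant $C_2$ with the $m_{i,j}=\min\{m_i,m_j\}$ denominator, since each pairwise divergence estimate is limited by the smaller of the two sample sizes.

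The main obstacle is the bookkeeping of the probability budget and the correct variance accounting in the coefficient-regularization term. Each of the $T$ tasks requires a uniform bound, and within each task the weighted estimator mixes $T$ empirical averages of differing sample sizes, so I must allocate the failure probability $\delta$ carefully (hence the factors $16T$ and $32T$ inside the logarithms) and verify that the bounded-differences constant for the weighted sum is $\balpha_{t,i}/m_i$ per coordinate — this is what feeds the $\sqrt{\sum_i \balpha_{t,i}^2/\beta_i}$ rather than a naive $\sum_i \balpha_{t,i}/\sqrt{m_i}$. The final step is purely to collect the four groups of terms, average over $t$, and match them to the labelled components in the statement: the weighted empirical loss, the coefficient regularization, the averaged empirical distribution distance, and the combined complexity-plus-optimal-loss term $C_2 + \frac{1}{T}\sum_{t,i}\balpha_{t,i}\lambda_{t,i}$.
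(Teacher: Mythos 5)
Your proposal follows essentially the same route as the paper's own proof: the pairwise Ben-David-style decomposition via the joint minimizer $h^{\star}_{t,i}$ (yielding $d_{\calH}(\D_t,\D_i)$ and $\lambda_{t,i}$), a weighted uniform-convergence VC bound whose Hoeffding variance term produces exactly the $\sqrt{\sum_i \balpha^2_{t,i}/\beta_i}$ regularizer, the Kifer-type concentration for the empirical divergence with the $m_{i,j}=\min\{m_i,m_j\}$ denominator, and the same $16T$/$32T$ union-bound bookkeeping. The plan is correct and matches the paper's four-step argument, so no further changes are needed.
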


Theorem 1 illustrates that the upper bound on the generalization error in our MTL settings can be decomposed into the following terms:
\begin{itemize}
    \item The empirical loss and empirical distribution similarities control the weights (or task relation coefficient) $\balpha_1,\dots,\balpha_T$. For instance, for a given task $t$, if task $i$ has a small empirical distance $d_{\calH}(\hat{\D}_t,\hat{\D}_i)$ and hypothesis $h_t$ has a small empirical loss $\hat{R}_i(h_t)$ on task $i$, it means that task $i$ is very similar to $t$. Hence, more information should be borrowed from task $i$ when learning $t$ and the corresponding coefficient $\balpha_{t,i}$ should have high values.
    \item Simultaneously, the \emph{coefficient regularization term} prevents the relation coefficients locating only on the $\balpha_{t,t}$, in which it will completely recover the independent MTL framework. Then the coefficient regularization term proposed a trade-off between learning the single task and sharing information from the others tasks. 
    \item The complexity and optimal terms depend on the setting hypothesis family $\mathcal{H}$. Given a fixed hypothesis family such as neural network, the complexity is constant. As for the optimal expected loss, throughout this paper we assume $\lambda_{t,i}$ is \emph{much smaller} than the empirical term, which indicates that the hypothesis family $\calH$ can learn the multiple tasks with a small expected risk. This is a natural setting in the MTL problem since we want the predefined hypothesis family to learn well for all of the tasks. While a high expected risk means such a hypothesis set cannot perform well, which contradicts our assumption.
\end{itemize}
    
In Theorem 1, we have derived a bound based on the $\calH$ divergence and applied in the classification problem. Then we proposed another bound based on the Wasserstein distance, which can be applied in the classification and regression problem.

\begin{theorem}
Let $\calH$ be a hypothesis family from $\calX$ to $[0,1]$, with pseudo-dimension $d$ and each member $h\in\calH$ is $K$ Lipschitz. If we have $T$ tasks generated by the underlying distribution and labelling function $\{(\D_1,f_1),\dots, (\D_T,f_T)\}$ with observation numbers $m_1,\dots,m_T$. If we adopt Wasserstein-1 distance as a similarity metric with cost function $c(\x,\y)= \|\x-\y\|_2$, then for any simplex $\balpha_{t}\in\Delta^{T}$, and for $\delta\in(0,1)$, with a probability at least $1-\delta$, for $h_1,\dots,h_T \in\mathcal{H}$, we have:
\begin{align*}
& \frac{1}{T}\sum_{t=1}^T R_t(h_t) \leq  \underbrace{\frac{1}{T}\sum_{t=1}^T \hat{R}_{\balpha_t}(h_t)}_\text{Weighted empirical loss} +  \underbrace{C_1  \sum_{t=1}^T \Big( \sqrt{ \sum_{j=1}^T \frac{\balpha^2_{t,j}}{\beta_j}} \Big)}_\text{Coefficient regularization} \\
& +  \underbrace{\frac{2K}{T}\sum_{t=1}^T \sum_{i=1}^T \balpha_{t,i} W_1 (\hat{D}_t,\hat{D}_i)}_\text{Empirical distribution distance} + \underbrace{C_2 +\frac{1}{T}\sum_{t=1}^T\sum_{i=1}^T \balpha_{t,i}\lambda_{t,i}}_\text{Complexity \& optimal expected loss},
\label{wbound}
\end{align*}
where $\beta_i = \frac{m_i}{m}$, $C_1 = 2 \sqrt{ \frac{2(d\log(\frac{2em}{d})+\log(\frac{16T}{\delta}))}{m}}$, $C_2 = \frac{2K}{T} \sum_{t=1}^T \sum_{i=1}^T \gamma_{t,i}$ with $\gamma_{t,i} = \mu_t m_t^{-1/s} + \mu_i m_i^{-1/s} +  \sqrt{\log(\frac{2T}{\delta})}(\sqrt{\frac{1}{m_t}} + \sqrt{\frac{1}{m_i}})$ and $s$ and $\mu_{\cdot}$ are some specified constants.  
\end{theorem}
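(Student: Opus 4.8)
The plan is to follow the same skeleton as the proof of Theorem 1, replacing every place where the $\calH$-divergence enters with its Wasserstein-1 counterpart, and then controlling the empirical Wasserstein terms with a concentration bound in place of the VC-style bound used for $d_{\calH}$. The starting point is a per-pair relative risk bound of the form $R_t(h_t) \le R_i(h_t) + 2K\,W_1(\D_t,\D_i) + \lambda_{t,i}$, which I would establish for every ordered pair $(t,i)$ and then aggregate through the simplex weights $\balpha_{t,i}$.

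To derive the per-pair bound, fix tasks $t$ and $i$ and let $h^\star$ attain $\lambda_{t,i} = \inf_{h\in\calH}\{R_t(h)+R_i(h)\}$. Since $\ell(a,b)=|a-b|$ obeys the triangle inequality, I would write $R_t(h_t) \le R_t(h_t,h^\star) + R_t(h^\star)$ and $R_i(h_t,h^\star) \le R_i(h_t) + R_i(h^\star)$, so it remains to transfer $R_t(h_t,h^\star)$ to $R_i(h_t,h^\star)$. Because each $h\in\calH$ is $K$-Lipschitz, the map $\x\mapsto |h_t(\x)-h^\star(\x)|$ is $2K$-Lipschitz, and Kantorovich--Rubinstein duality (valid for $W_1$ with cost $c(\x,\y)=\|\x-\y\|_2$) gives $|R_t(h_t,h^\star)-R_i(h_t,h^\star)| \le 2K\,W_1(\D_t,\D_i)$. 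Chaining these inequalities yields the claimed $R_t(h_t) \le R_i(h_t) + 2K\,W_1(\D_t,\D_i) + \lambda_{t,i}$.

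Next I would expand $R_t(h_t) = \sum_i \balpha_{t,i} R_t(h_t)$ and apply the per-pair bound termwise to obtain $R_t(h_t) \le \sum_i \balpha_{t,i} R_i(h_t) + 2K\sum_i \balpha_{t,i} W_1(\D_t,\D_i) + \sum_i \balpha_{t,i}\lambda_{t,i}$, then average over $t$. Two empirical conversions remain. First, the weighted expected loss $\sum_i \balpha_{t,i} R_i(h_t)$ must be replaced by $\hat{R}_{\balpha_t}(h_t)$; this is precisely the step that generates the coefficient-regularization term, and since $\calH$ maps into $[0,1]$ with pseudo-dimension $d$, I would reuse the uniform-convergence argument from Theorem 1. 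The weighted sum of empirical means drawn from samples of sizes $m_i=\beta_i m$ has variance proportional to $\sum_i \balpha_{t,i}^2/\beta_i$, which after a pseudo-dimension growth-function bound and a union bound over the $T$ tasks produces the identical constant $C_1$ and the factor $\sqrt{\sum_j \balpha_{t,j}^2/\beta_j}$. Second, each population distance $W_1(\D_t,\D_i)$ must be swapped for the empirical $W_1(\hat{\D}_t,\hat{\D}_i)$.

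The main obstacle is this last conversion. I would use the triangle inequality for $W_1$ to write $W_1(\D_t,\D_i) \le W_1(\hat{\D}_t,\D_t) + W_1(\hat{\D}_t,\hat{\D}_i) + W_1(\hat{\D}_i,\D_i)$, and then control each empirical-to-population term $W_1(\hat{\D}_t,\D_t)$ by a concentration result for the empirical Wasserstein distance: its expectation decays like $\mu_t m_t^{-1/s}$ (the dimension-dependent Fournier--Guillin / Bolley--Guillin--Villani rate, where $s$ and the $\mu_\cdot$ are the stated constants), while a bounded-differences (McDiarmid) argument adds a deviation term of order $\sqrt{\log(2T/\delta)/m_t}$. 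Summing these over all pairs $(t,i)$ gives exactly $C_2 = \frac{2K}{T}\sum_{t,i}\gamma_{t,i}$ with $\gamma_{t,i}$ as defined, after a union bound over the $T$ per-task deviation events. Assembling the weighted empirical loss, the coefficient-regularization term, the empirical Wasserstein term $\frac{2K}{T}\sum_{t,i}\balpha_{t,i}W_1(\hat{\D}_t,\hat{\D}_i)$, and the complexity/optimal-loss term yields the bound; the delicate bookkeeping lies in splitting the failure probability $\delta$ across the two union bounds so that the logarithmic factors align with the stated $C_1$ and $\gamma_{t,i}$.
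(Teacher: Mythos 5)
Your proposal is correct and follows essentially the same route as the paper's proof: the same three-term decomposition via the joint minimizer $h^{\star}_{t,i}$ yielding $R_t(h_t) \le R_{\balpha_t}(h_t) + 2K\sum_i \balpha_{t,i}W_1(\D_t,\D_i) + \sum_i \balpha_{t,i}\lambda_{t,i}$, the same triangle-inequality conversion of population to empirical Wasserstein distances controlled by $m^{-1/s}$-type concentration rates (the paper packages this via the Weed--Bach lemma, you via Fournier--Guillin expectation plus McDiarmid, which is an equivalent statement), and the same reuse of Theorem 1's weighted uniform-convergence bound with pseudo-dimension to produce $C_1$. Your use of Kantorovich--Rubinstein duality for the $2K$-Lipschitz function $|h_t - h^{\star}|$ is just the dual phrasing of the paper's direct coupling argument, so there is no substantive difference.
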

The proof w.r.t. the Wasserstein-1 distance is analogous to the proof in the $\calH$-divergence but with different assumptions.

\paragraph{Remark} The upper bound of the generalization error shows some intuitions that we should not only minimize the weighted empirical loss, but also minimize the empirical distribution divergence between each task. Moreover, in MTL approaches based on neural networks, these conclusions proposed a theoretical support for understanding the role of \emph{adversarial losses}, which exactly minimize the distribution divergence.  

\section{Adversarial Multitask Neural Network}

From the generalization error upper bound in the MTL framework, we developed a new training algorithm for the Adversarial Multitask Neural Network (AMTNN). It consists in multiple training steps by iteratively  optimizing the parameters in the neural network for a given fixed relation coefficient $\balpha_1,\dots,\balpha_T$ and estimating the relation coefficient, given fixed neural network weights. 

Moreover, we have three types of parameters in AMTNN: $\btheta^{f}$, $\btheta_{\cdot}^{d}$ and $\btheta_{\cdot}^{h}$, corresponding to the parameter for feature extractor, adversarial loss (distribution similarity) and task loss, respectively.   

To simplify the problem, we assume that each task has the same number of observations, i.e., $\beta_i =\frac{1}{T}$, and that regularization will use the $l_2$ norm of $\balpha_t$.

\subsection{Neural network parameters update}

Given a fixed $\balpha_1,\dots,\balpha_T$, according to the theoretical bound, we want to minimize the weighted empirical error $\frac{1}{T}\sum_{t=1}^{T} \hat{R}_{\balpha_t}(\btheta^f,\btheta_{t}^{h})$ and the empirical distribution ``distance'' $\hat{d}(\D_t,\D_i)$ with $t,i=1,\dots,T$. Inspired by \cite{ganin2016domain}, the minimization of the distribution ``distance'' is equivalent to the maximization of the adversarial loss $\hat{E}_{t,i}(\btheta^f,\btheta_{t,i}^d)$. Overall, we have the following loss function with a trade-off coefficient $\rho$:
\begin{equation}
    \min_{\btheta^f,\btheta^h_1,\dots,\btheta^h_t} \max_{\btheta^{d}_{t,i}}  \sum_{t=1}^T \hat{R}_{\balpha_t}(\btheta^f,\btheta_{t}^{h}) +  \rho \sum_{i,t=1}^{T} \balpha_{t,i} \hat{E}_{t,i}(\btheta^{f},\btheta_{t,i}^d).
    \label{nn_loss}
\end{equation}
It should be noted that for a given task $t$, the sum loss can be expressed as $\frac{1}{T}\sum_{i=1}^T \balpha_{t,i} \sum_{\x \in \hat{D}_i} \ell((\x,y),\btheta^f,\btheta_t^h)$, with $\ell$ being the cross entropy loss. This means that the empirical loss is a weighted sum of all of the task losses, determined by task relation coefficient $\balpha_t$. This is coherent with \cite{murugesan2016adaptive}, which does not provide theoretical explanations. 

Also, the adversarial loss $\hat{E}_{t,i}(\btheta^f,\btheta^d_{t,i})$ is a symmetric metric for which we need to compute $\hat{E}_{t,i}$ only for $t<i$. Motivated by \cite{ganin2016domain}, the neural network will output for a pair of observed \emph{unlabeled} tasks $(\hat{\D}_t,\hat{\D}_i)$ a score in $[0,1]$ to predict from which distribution it comes. Supposing the output function is $g_{t,i}(\x,(\btheta^f,\btheta^d_{t,i}))\equiv g_{t,i}(\x)$, the adversarial loss will be the following under different distance metrics:
\begin{description}
\item[$\calH$ divergence:] $$\hat{E}_{t,i} = \sum_{\x\in\hat{\D}_t} \log(g_{t,i}(\x)) + \sum_{\x\in\hat{D}_i} \log(1-g_{t,i}(\x));$$
\item[Wasserstein-1 distance:] Since the primal form has a high computational complexity, we adopted the same strategy as \cite{arjovsky2017wasserstein} by estimating the empirical Kantorovich-Rubinstein duality of  Wasserstein-1 distance, which leads to $W_1(\hat{D}_t,\hat{D}_i) = \frac{1}{K} \sup_{\|f\|\leq K} \Big( \E_{\x\in\hat{\D}_t} [g_{t,i}(\x)] - \E_{\x\in\hat{\D}_i}[g_{t,i}(\x)] \Big)$. Combining it with the result of Theorem 2, we can derive that $$\hat{E}_{t,i} = \E_{\x\in\hat{\D}_t} [g_{t,i}(\x)] - \E_{\x\in\hat{\D}_i}[g_{t,i}(\x)].$$
\end{description}

\begin{figure}[t!]
\centering 
	\includegraphics[width=0.35\textwidth]{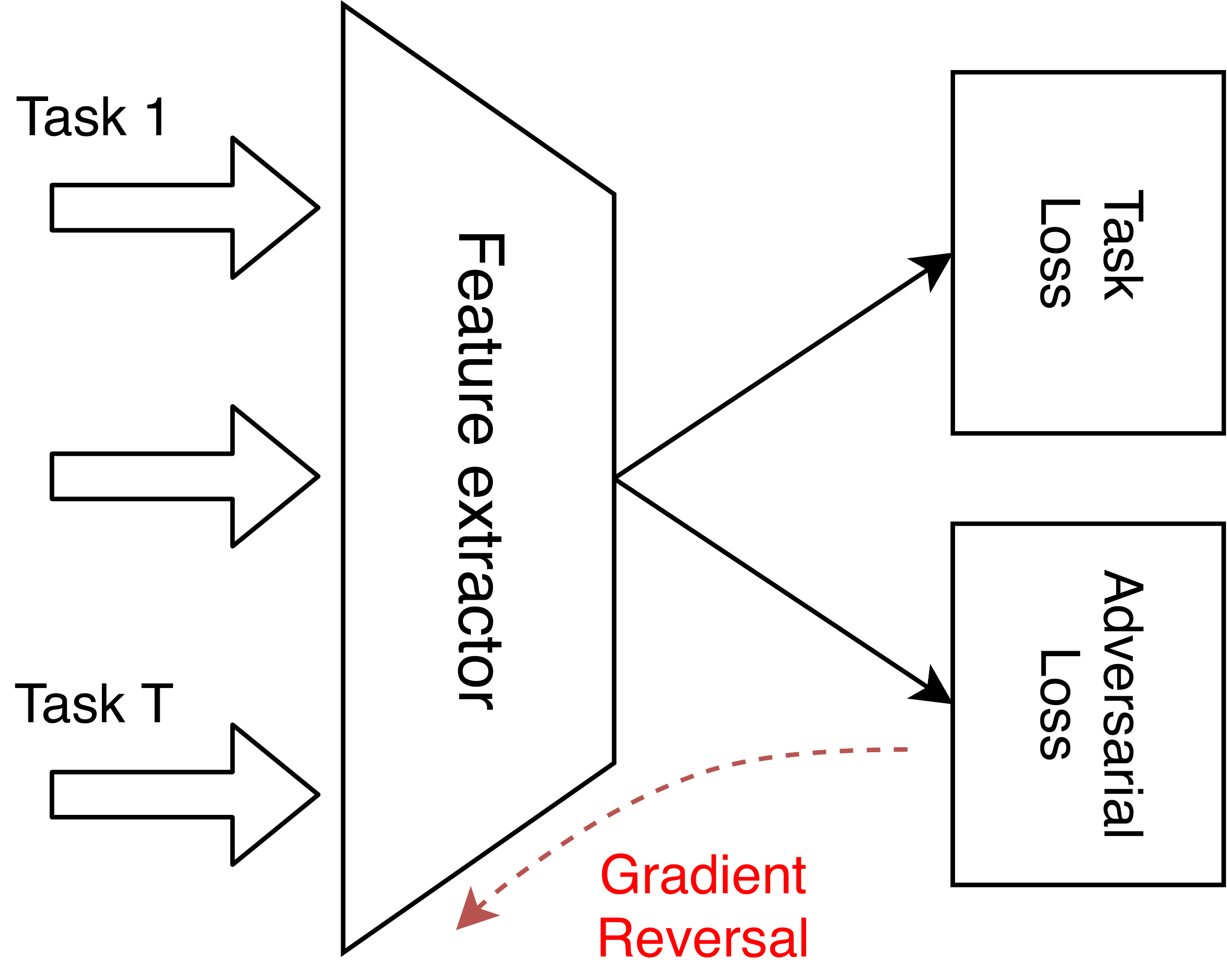}
	\caption{General framework of Adversarial Multitask Neural Network (AMTNN).}
\label{fig:deep_MTL}
\end{figure}

\begin{center}
\begin{algorithm}[t!]
		\caption{AMTNN updating algorithm}
		\begin{algorithmic}[1] 
		\REQUIRE Samples from different tasks $\{\hat{\D}_t\}_{t=1}^T$, initial coefficients $\{\balpha_t\}_{t=1}^T$ and learning rate $\eta$
        \ENSURE Neural network parameters $\btheta^{f}$, $\btheta_{\cdot}^h$, $\btheta_{\cdot}^d$ and relationship coefficient $\balpha_1,\dots,\balpha_T$
        \FOR{mini-batch of samples $\{(\x^b_t,\y^b_t)\}$ from $\{\hat{\D}_t\}_{t=1}^T$}
        \STATE For each the distribution pair $(t,i)$ with $t<i$, compute the adversarial loss $\hat{E}_{t,i}(\btheta^f,\btheta^d_{t,i})$.
        \STATE For each task $t$, define the empirical loss matrix $\hat{R}_{t,i} = \sum_{(\x_{i}^b,\y_{i}^b) \in \hat{D}_i} \ell((\x_{i}^b,\y_{i}^b),\btheta^f,\btheta_t^h) $ and compute the label loss:
        $$\hat{R}_{\balpha_t}  = \sum_{i=1}^T \balpha_{t,i}\hat{R}_{t,i} $$
        \STATE Update $\btheta^f,\btheta^h_{t}$: $\btheta^h_{t} = \btheta^h_{t} - \eta \frac{\partial \hat{R}_{\balpha_t}}{\partial \btheta^h_{t}} $ and $\btheta^f = \btheta^f - \eta \left(\sum_{t=1}^T \frac{\partial \hat{R}_{\balpha_t}}{\partial \btheta^f} + \sum_{t,i: t<i}^T (\balpha_{t,i }+\balpha_{i,t})\frac{\partial \hat{E}_{t,i}}{\partial \btheta^f}\right) $
        \STATE Update $\btheta^d_{t,i}$ ($t<i$): $\btheta_{t,i}^d = \btheta_{t,i}^d + \eta \Big( (\balpha_{t,i}+\balpha_{i,t})\frac{\partial \hat{E}_{t,i}}{\partial \btheta^d_{t,i}}) \Big)$
        \ENDFOR
		\STATE Re-estimate $\{\balpha_t\}_{t=1}^T$ by optimizing over Eq.~(\ref{alpha_loss}).
        \end{algorithmic}
        \label{amtnn_algo}
\end{algorithm}
\end{center}

\subsection{Relation coefficient updating}

The second step after updating the neural network parameter, we need to re-estimate the coefficients $\balpha_1,\dots,\balpha_T$ when giving fixed $\btheta^f,\btheta^h_{\cdot},\btheta^d_{\cdot}$. According to the theoretical guarantees, we need to solve the following convex constraint optimization problem:
\begin{align}
        \min_{\balpha_1,\dots,\balpha_T} & \sum_{t=1}^T \hat{R}_{\balpha_t}(\btheta^f,\btheta_{t}^{h})+ \kappa_1 \sum_{t=1}^T \|\balpha_t\|_2\nonumber\\
            & \quad + \kappa_2 \sum_{i,t=1}^{T} \balpha_{t,i} \hat{d}_{t,i}(\btheta^{f},\btheta_{t,i}^d),\label{alpha_loss}\\
        \mathrm{s.t.} \quad & \|\balpha_t\|_1 = 1, ~~~~ \balpha_{t,i}\geq 0 ~~~ \forall t,i,\nonumber
\end{align}
where $\kappa_1$ and $\kappa_2$ are hyper-parameters and $\hat{d}_{\cdot}$ is the estimated distribution ``distance''. This distribution ``distance'' may have different forms with according to the similarity metric used:
\begin{description}
    \item[$\calH$ divergence:] According to \cite{pentina2017multi,ben2010theory,ganin2016domain}, the distribution ``distance'' is proportional to the accuracy of the discriminator $\btheta^d_{\cdot}$, i.e., we applied $g_{t,i}(\x)$ to predict $\x$ coming from distribution $t$ or $i$. The prediction accuracy reflects the difficulty to distinguish two distributions. Hence, we set $\hat{d}_{t,i}$ as the accuracy of the discriminator $g_{t,i}(\x)$;
    \item[Wasserstein-1 distance:] According to \cite{arjovsky2017wasserstein}, the approximation $\hat{d}_{t,i} = - \hat{E}_{t,i}$ is used.
\end{description}

We also assume $\hat{d}_{t,t}=0$ since the discriminator cannot distinct from two identical distributions. Moreover, the expected loss $\balpha_t \lambda_{t,\cdot}$ is omitted since we assume that $\lambda_{t,\cdot}$ is much smaller than the empirical term. Then, we only use the empirical parts to re-estimate the relationship coefficient.

As it is mentioned in the theoretical part, the $L_2$ norm regularization aims at preventing all of the relation coefficient from being concentrated on the current task $\balpha_{t,t}$. The theoretical bound proposed an elegant interpretation for training AMTNN, which is shown in Algorithm \ref{amtnn_algo}.

\subsection{Training algorithm}
The general framework of the neural network is shown in Fig.~\ref{fig:deep_MTL}. We propose a complete iteration step on how to update the neural network parameters and relation coefficients in Algorithm \ref{amtnn_algo}. When updating the feature extraction parameter $\btheta^f$, we applied \emph{gradient reversal} \cite{ganin2016domain} in the training procedure. We also add the \emph{gradient penalty} \cite{gulrajani2017improved} to improve the Lipschitz property when training with the adversarial loss based on Wasserstein distance.

\section{Experiments}
We evaluate the modified AMTNN method on two benchmarks, that is the digits datasets and the Amazon sentiment dataset. We also consider the following approaches, as baselines to make comparisons: 
\begin{itemize}
    \item MTL\_uni: the vanilla MTL framework where $\frac{1}{T}\sum_{t=1}^T \hat{R}_t(\btheta^f,\btheta^h_t)$ is minimized;
    \item MTL\_weighted: minimizing $\frac{1}{T}\sum_{t=1}^T \hat{R}_{\balpha_t}(\btheta^f,\btheta^h_t)$, computation of $\balpha_t$ depending on $\hat{R}_{t,i}$, similarly to \cite{murugesan2016adaptive};
    \item MTL\_disH and MTL\_disW: we apply the same type of loss function but with two different adversarial losses ($\calH$ divergence and Wasserstein distance) and a general neural network without a special part for the NLP \cite{liu2017adversarial};
    \item AMTNN\_H and AMTNN\_W: proposed approaches with two different adversarial losses, $\calH$ divergence and Wasserstein distance respectively.
\end{itemize}

\subsection{Digit recognition}
We first evaluate our algorithm on three benchmark datasets of digit recognition, which are datasets, MNIST, MNIST\_M, and SVHN. The MTL setting is to jointly allow a system to learn to recognize the digits from the three datasets, which can differ significantly. In order to show the effectiveness of MTL, only a small portion of the original dataset is used for training (i.e., 3K, 5K and 8K for each task). 

We use the LeNet-5 architecture and define the feature extractor $\btheta^{f}$ as the two convolutional layers of the network, followed by multiple blocks of two fully connected layers as label prediction parameter $\btheta^{h}_{\cdot}$ and discriminator parameter $\btheta^{d}_{\cdot}$. Five repetitions are conducted for each approach, and the average test accuracy ($\%$) is reported in Table \ref{tab:digit}. We also show the estimated coefficient $\{\balpha_t\}_{t=1}^3$ of AMTNN\_H and AMTNN\_W, in Fig.~\ref{fig:relation}.

\begin{figure}[!tbp]
		\centering 
		\subfloat[AMTNN\_W]{\includegraphics[width=0.26\textwidth]{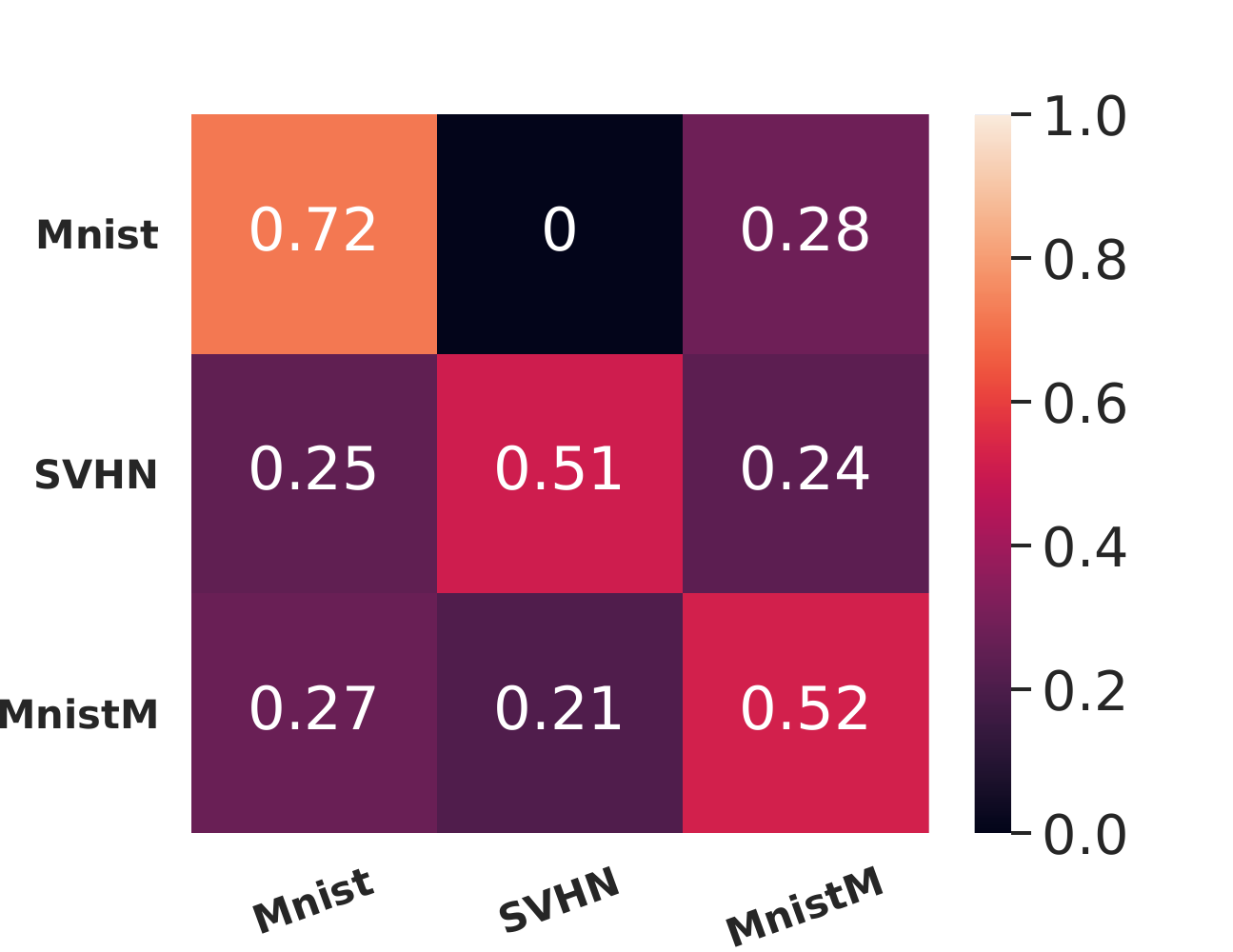}}
		\subfloat[AMTNN\_H]{\includegraphics[width=0.26\textwidth]{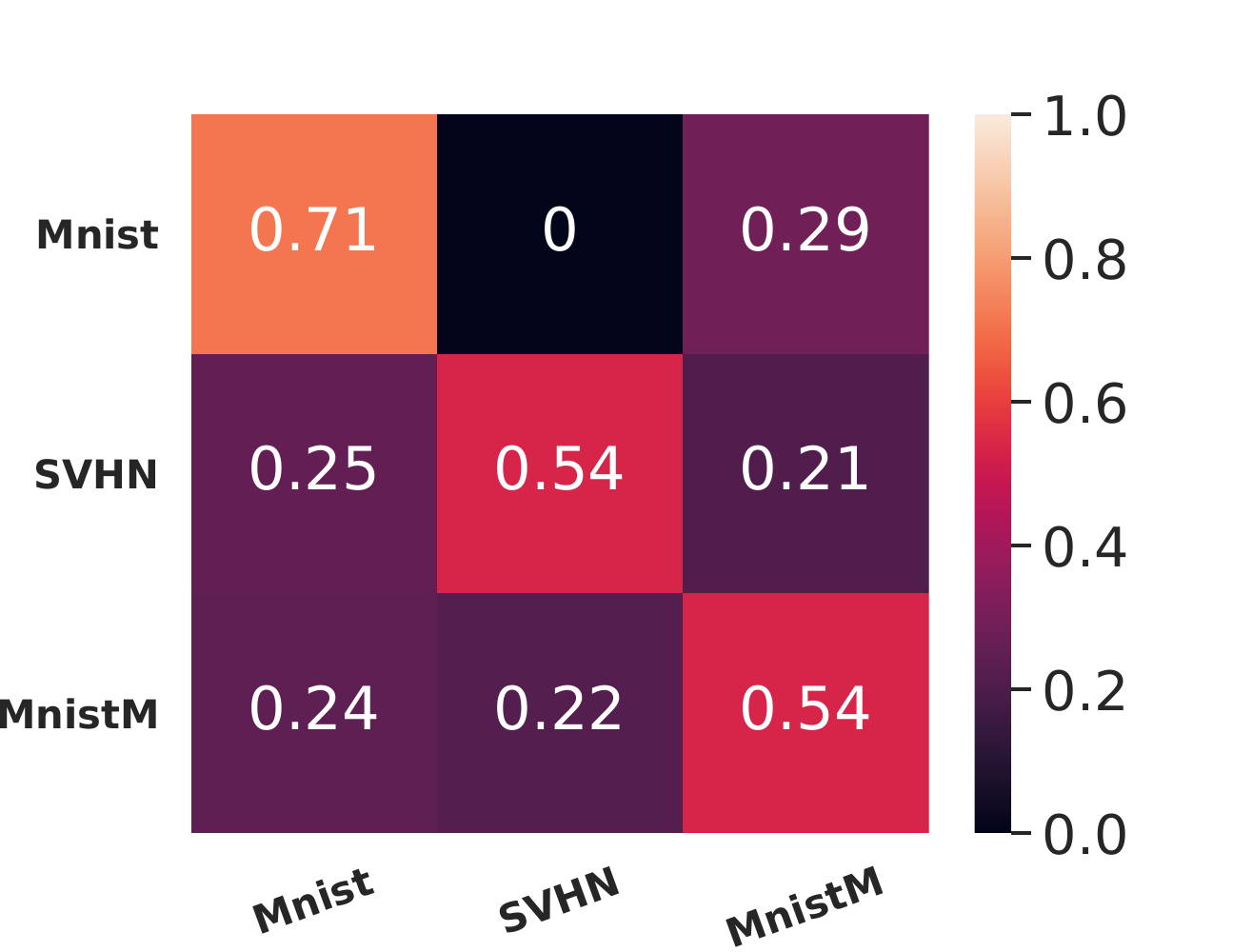}}
\caption{Estimated task relation coefficients matrix from the two proposed algorithms, with training set of 8K instances.}
\label{fig:relation}
\end{figure}

\begin{table*}[t]
    \centering
    \resizebox{1\textwidth}{!}{
        \begin{tabular}{l|ccc|c|ccc|c|ccc|c}
        \toprule
        & \multicolumn{4}{c|}{\textbf{3K}} & \multicolumn{4}{c|}{\textbf{5K}} & \multicolumn{4}{c}{\textbf{8K}} \\ 
        Approach & \textbf{MNIST} & \textbf{MNIST\_M}    & \textbf{SVHN}  & Average  & \textbf{MNIST} & \textbf{MNIST\_M}  & \textbf{SVHN} &  Average   & \textbf{MNIST}  & \textbf{MNIST\_M} & \textbf{SVHN}  & Average\\ 
        \midrule
        
        \multicolumn{1}{l|}{\textbf{MTL\_uni}} &  93.23    & 76.85     &57.20
        &  75.76    &  97.41 &   77.72   &  67.86    & 81.00   
        & 97.73  &  83.05   &  71.19    &   83.99  \\

        \multicolumn{1}{l|}{\textbf{MTL\_weighted}} &  89.09  & 73.69   
        &    68.63  &  77.13   & 91.43     &   74.07   &  73.81    & 79.77
        &  92.01    &    76.69  &    73.77  &   80.82  \\
        
        \multicolumn{1}{l|}{\textbf{MTL\_disH}} & 89.91     &   \textbf{81.13}   & 70.31
        & 80.45  &    91.92  & 82.68     &    73.27  &  82.62  &    92.96  &   \textbf{85.04}   &    78.50  &   85.50  \\
        \multicolumn{1}{l|}{\textbf{MTL\_disW}} & 96.77     & 80.38
        &   68.40 & 81.85   &  95.47   &   \textbf{83.48}   &   72.66
        &  83.87  
        &98.09     &   84.13 &    74.37  &    85.53 \\ \hline
        
        \multicolumn{1}{l|}{\textbf{AMTNN\_H}} & \textbf{97.47}     &  77.87   & 71.26
        & 82.20     &    \textbf{97.94}  &   76.28   &   76.06 &  83.43   &     \textbf{98.28}  &  82.75    & 76.63     &   85.89  \\ 
        \multicolumn{1}{l|}{\textbf{AMTNN\_W}} &   97.20   &  80.70    &  \textbf{76.93}
        &\textbf{84.95}     &  97.67    &    82.50  & \textbf{76.36}     &  \textbf{85.51}
        &   98.01   &   82.53   &  \textbf{79.97}    &  \textbf{86.84}   \\
        \bottomrule
\end{tabular}
}
    \caption{Average test accuracy (in $\%$) of MTL algorithms on the digits datasets.}
    \label{tab:digit}
\end{table*}

\begin{table*}[t]
    \centering
    \resizebox{0.95\textwidth}{!}{
        \begin{tabular}{l|cccc|c|cccc|c}
        \toprule
        & \multicolumn{5}{c|}{\textbf{1000 examples}} & \multicolumn{5}{c}{\textbf{1600 examples}}  \\ 
        Approach & \textbf{Book} & \textbf{DVDs}    & \textbf{Kitchen}  &  \textbf{Elec} & Average  & \textbf{Book} & \textbf{DVDs}    & \textbf{Kitchen}  &  \textbf{Elec} & Average   \\ 
        \midrule
        
        \multicolumn{1}{l|}{\textbf{MTL\_uni}} & 81.31     &  78.44    & 87.07
        &  84.57   &  82.85   &   81.35   &  80.14    & 86.54   
        & 87.50  &  83.88      \\

        \multicolumn{1}{l|}{\textbf{MTL\_weighted}} &  81.88  & 79.02    
        &  86.91    &   85.31  &    83.28  &   80.72   &  81.20    & 87.60
        &  88.12    &    84.41  \\
        
        \multicolumn{1}{l|}{\textbf{MTL\_disH}} &  81.23    & 78.12     &  87.34
        &  84.82  & 82.88     & \textbf{81.92}     &    79.86  &  87.79  &    87.31  &   84.22   \\
        
        \multicolumn{1}{l|}{\textbf{MTL\_disW}} & 81.13    & 78.38     &  87.11
        &  84.82   & 82.86   &   81.88   &   79.81
        &  87.07   & 87.69     &   84.11  \\ \hline
        
        \multicolumn{1}{l|}{\textbf{AMTNN\_H}} &   \textbf{82.36}   &  79.24    &  \textbf{87.42} 
        &   85.53  &    \textbf{83.64}  &   80.82   &   \textbf{81.54} &  \textbf{88.27}   &     88.17 &  \textbf{84.70}      \\ 
        
        \multicolumn{1}{l|}{\textbf{AMTNN\_W}} &  81.68    &  \textbf{79.38}     &  87.27
        &  \textbf{85.66}   & 83.50   &    81.20  & 80.38     &  87.69
        &   \textbf{88.46}   &   84.44    \\
        \bottomrule

\end{tabular}
}
    \caption{Average test accuracy (in $\%$) of MTL algorithms in the sentiment dataset.}
    \label{tab:amazon}
\end{table*}

\paragraph{Discussion} Reported results show that the proposed approaches outperform all of the baselines in the task average and also in most single tasks. 
Particularly for the AMTNN\_W, it outperforms the baselines with $1.0\% \sim 2.9\%$ in the test accuracy. The reason can be that the Wasserstein-1 distance is more efficient for measuring the high dimensional distribution, which has been verified theoretically \cite{redko2017theoretical}.  Moreover, the $\calH$ divergence-based approach (AMTNN\_H) outperforms the baselines without significant increment ($< 0.3\%$). The reason may be that the VC-dimension with $\calH$ divergence is not a good metric for measuring a high dimensional complex dataset, coherently with \cite{li2018extracting}.

As for the coefficients $\balpha_t$, the proposed algorithm appears robust at estimating these task relationships, with almost identical values under different similarity metrics. Moreover, in contrast to the previous approaches, we obtain a non-symmetric matrix with a better interpretability. For instance, when learning for the MNIST dataset, only information from MNIST\_M is used, which is reasonable since these two tasks have the same digits configurations with different background, while SVHN is different in most ways (i.e., digits taken from street view house numbers). However, when learning MNIST\_M, information from SVHN is beneficial because it provides some information on the background, which is absent from MNIST but similar to MNIST\_M. Therefore, the information of both tasks are involved in training for MNIST\_M.

\begin{figure}[tb]
		\centering 
		\includegraphics[width=0.475\textwidth]{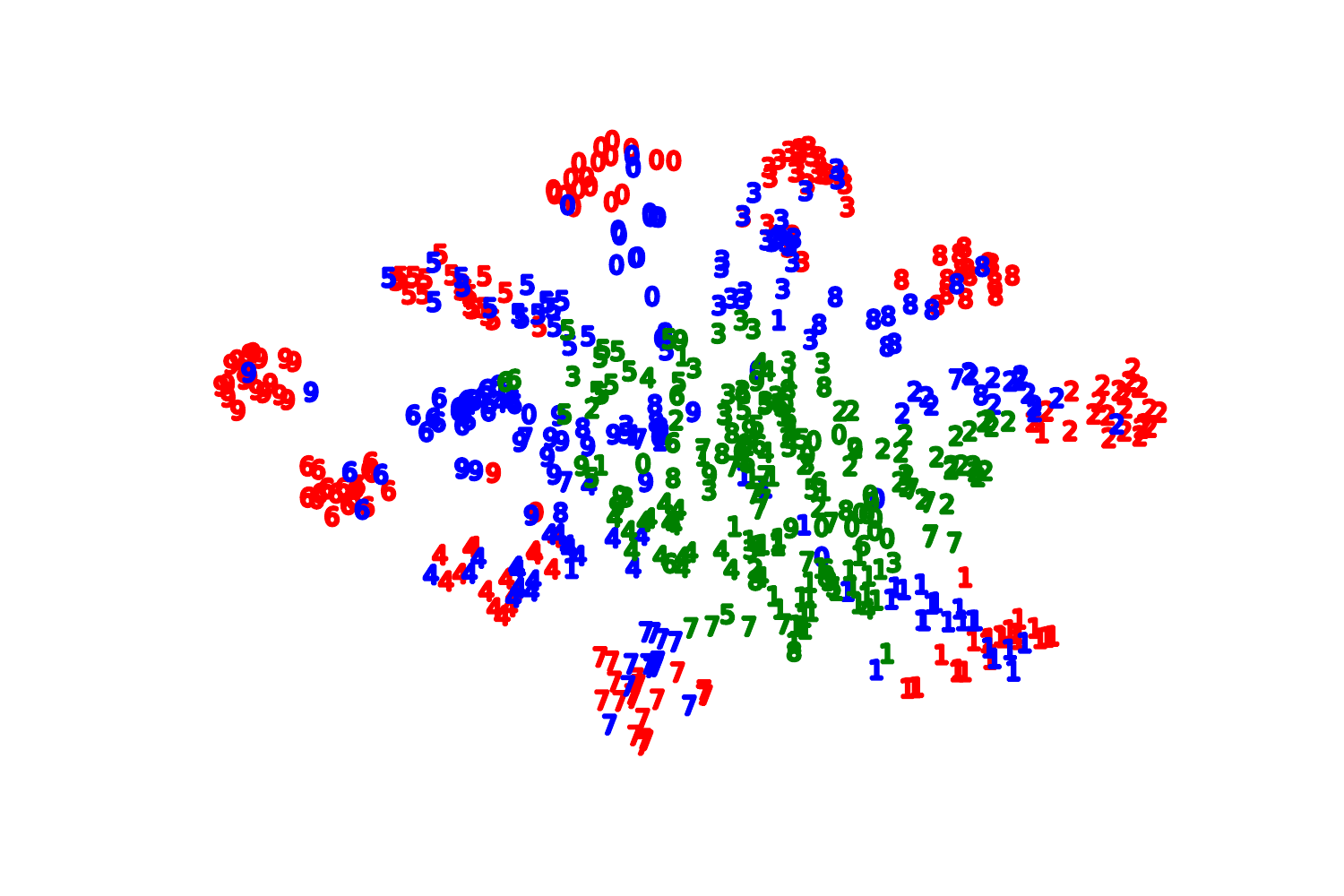}
\caption{t-SNE in the feature space of task MNIST in AMTNN\_W for 8K. Red: MNIST dataset; blue: MNIST\_M dataset; green: SVHN data set.}
\label{fig:tsne}
\end{figure}

In order to show the role of the weighted sum, we use t-SNE to visualise in Fig.~\ref{fig:tsne} the embedded space of the MNIST task from the training data. Information from SVHN is not relevant for learning MNIST as $\balpha_{1,2}=0$ (see Fig.~\ref{fig:relation}), such that SVHN data is arbitrarily distributed in the embedded space without influence on the final result. At the same time, information from MNIST\_M is used for training on the MNIST task ($\balpha_{1,3} = 0.28$), which can be seen by a slight overlap in the embedded space. From that perspective, the role of weighted loss, which helps us to achieve some reasonable modifications of the decision boundary, is trained by the relevant and current tasks jointly. For a small scale task (typically the MTL scenario), during the test procedure, the agent predicts the labels by borrowing its neighbors (relevant tasks) information. This is coherent with the Probabilistic Lipschitzness condition \cite{urner2013probabilistic}.

\subsection{Sentiment analysis}
We also evaluate the proposed algorithm on \emph{Amazon reviews} datasets. We extract reviews from four product categories: Books, DVD, Electronics and Kitchen appliances. Reviews datasets are pre-processed with the same strategy proposed by \cite{ganin2016domain}:  10K dimensional input features of uni-gram/bi-gram occurrences and binary output labels $\{0,1\}$, Label 0 is given if the product is ranked less than 3 stars, otherwise label 1 is given for products above 3 stars. Results are reported for two sizes of labelled training sets, that is $1000$ and $1600$ examples in each product category.

The output of the first fully connected layers as feature extractor parameters $\btheta^f$ and several sets of two fully-connected layers are given as discriminator $\btheta^d_{\cdot}$ and label predictor $\btheta^h_{\cdot}$, with test accuracy ($\%$) reported in Table \ref{tab:amazon} as an average over 5 repetitions.  

\paragraph{Discussions}
We found the proposed approaches outperform all of the baselines in the task average and also in the most tasks. Meanwhile, we observed that the role of adversarial loss (MTL\_disH, MTL\_disW, AMTNN\_H and AMTNN\_W) is not that significant (gains $<0.25\%$), compared to the results on the digits datasets. The possible reason is that we applied the algorithm on the pre-processed feature instead of the original feature, making the discriminator $\btheta^d_{\cdot}$ less powerful in the feature adaptation. By the contrary, adding the weighted loss can improve performance by $0.4\% \sim 0.9\%$, enhancing the importance of the role of the explicit similarity, which is also coherent with \cite{murugesan2016adaptive}. 

\section{Conclusion}
In this paper, we propose a principle approach for using the task similarity information in the MTL framework. We first derive an upper bound of the generalization error in the MTL. Then, according to the theoretical results, we design a new training algorithm on the Adversarial Multi-task Neural Network (AMTNN). Finally, the empirical results on the benchmarks are showing that the proposed algorithm outperforms the baselines, reaffirming the benefits of theoretical insight in the algorithm design. In the future, we want to extend to a more general transfer learning scenario such as the different outputs space. 

\section*{Acknowledgments}

This work was made possible with funding from NSERC-Canada, Mitacs, Prompt-Qu\'ebec, E Machine Learning and Thales Canada. We thank Annette Schwerdtfeger and Fan Zhou for proofreading this manuscript.

\newpage
\onecolumn
\section*{Proof of Theorem 1}
\begin{theorem*}
Let $\calH$ be a hypothesis family with VC dimension $d$. If we have $T$ tasks generated by the underlying distribution and labelling function $\{(\D_1,f_1),\dots, (\D_T,f_T)\}$ with observation numbers $m_1,\dots,m_T$. If we adopt $\mathcal{H}$ divergence as a similarity metric, then for any fixed simplex $\balpha_{t}\in\mathbb{R}^{T}_{+}$, and for $\delta\in(0,1)$, with probability at least $1-\delta$, for $h_1,\dots,h_T \in\mathcal{H}$, we have:
\begin{equation*}
\frac{1}{T}\sum_{t=1}^T R_t(h_t)  \leq  \underbrace{\frac{1}{T}\sum_{t=1}^T \hat{R}_{\balpha_t}(h_t)}_\text{Weighted empirical loss}+ \underbrace{C_1  \sum_{t=1}^T \Big( \sqrt{ \sum_{i=1}^T \frac{\balpha^2_{t,i}}{\beta_i}} \Big)}_\text{Coefficient regularization} + \underbrace{\frac{1}{T}\sum_{t=1}^T \sum_{i=1}^T \balpha_{t,i} \hat{d}_{\calH}(\D_t,\D_i)}_\text{Empirical distribution distance} + \underbrace{C_2 + \frac{1}{T}\sum_{t=1}^T\sum_{i=1}^T \balpha_{t,i}\lambda_{t,i}}_\text{Complexity term and optimal expected loss}
\end{equation*}
Where $\beta_i = \frac{m_i}{m}$, $C_1 = 2 \sqrt{ \frac{2(d\log(\frac{2em}{d})+\log(\frac{16T}{\delta}))}{m}}$ and $C_2 =2\min_{i,j} \sqrt{\frac{2d\log(2m_{i,j})+ \log(32T/\delta)}{m_{i,j}}}$ with $m_{i,j} = \min\{ m_i,m_j\}$, and $\lambda_{i,j} = \inf_{h\in\calH} \{R_i(h) + R_j(h)\}$ the \emph{joint expected minimal error} w.r.t. $\calH$.
\end{theorem*}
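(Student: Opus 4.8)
\section*{Proof proposal for Theorem 1}

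The plan is to reduce the multitask bound to a single-task domain-adaptation inequality applied pairwise, and then convert the resulting population quantities into their empirical counterparts through two separate uniform-convergence arguments. First I would establish the pairwise transfer lemma: for any $h\in\calH$ and any two tasks $t,i$,
\[
R_t(h)\le R_i(h) + d_\calH(\D_t,\D_i) + \lambda_{t,i}.
\]
This follows purely from the triangle inequality satisfied by the $\ell_1$ loss $\ell(a,b)=|a-b|$ together with the definition of $d_\calH$. Concretely, writing $h^\ast$ for a joint minimizer achieving $\lambda_{t,i}=R_t(h^\ast)+R_i(h^\ast)$, I would chain $R_t(h)=R_t(h,f_t)\le R_t(h,h^\ast)+R_t(h^\ast)$, then bound $R_t(h,h^\ast)\le R_i(h,h^\ast)+|R_t(h,h^\ast)-R_i(h,h^\ast)|\le R_i(h,h^\ast)+d_\calH(\D_t,\D_i)$ (legal since $h,h^\ast\in\calH$), and finally $R_i(h,h^\ast)\le R_i(h)+R_i(h^\ast)$.

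Next, because each $\balpha_t$ lies on the simplex I can write $R_t(h_t)=\sum_i\balpha_{t,i}R_t(h_t)$ and apply the lemma term by term, yielding, after averaging over $t$, the population bound
\[
\tfrac{1}{T}\sum_t R_t(h_t)\le \tfrac{1}{T}\sum_t\sum_i\balpha_{t,i}R_i(h_t)+\tfrac{1}{T}\sum_t\sum_i\balpha_{t,i}d_\calH(\D_t,\D_i)+\tfrac{1}{T}\sum_t\sum_i\balpha_{t,i}\lambda_{t,i}.
\]
It then remains to replace the true weighted risk $\sum_i\balpha_{t,i}R_i(h_t)$ by its empirical version $\hat R_{\balpha_t}(h_t)$, and the true divergences $d_\calH(\D_t,\D_i)$ by the empirical $d_\calH(\hat{\D}_t,\hat{\D}_i)$.

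For the first replacement I would treat $\sum_i\balpha_{t,i}\hat R_i(h)$ as a single weighted empirical process over the pooled sample of size $m$, in which point $j$ of task $i$ carries weight $\balpha_{t,i}/m_i$. Changing one such point perturbs $\sup_{h}[\sum_i\balpha_{t,i}(R_i(h)-\hat R_i(h))]$ by at most $\balpha_{t,i}/m_i$, so a bounded-difference (McDiarmid) argument gives a sub-Gaussian tail with variance proxy $\sum_i\balpha_{t,i}^2/m_i=\tfrac{1}{m}\sum_i\balpha_{t,i}^2/\beta_i$; symmetrizing and invoking Sauer's lemma (growth function $\le(2em/d)^d$) to control the expected supremum produces precisely the $C_1\sqrt{\sum_i\balpha_{t,i}^2/\beta_i}$ term after a union bound over the $T$ outer tasks. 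For the second replacement I would invoke the standard VC concentration of the empirical $\calH$-divergence (Ben-David et al.), uniformly over the task pairs, which yields the $C_2$ complexity term; here $\sum_i\balpha_{t,i}=1$ and $\tfrac{1}{T}\sum_t1=1$ let the weighted divergence deviations collapse into a single additive constant. Splitting the confidence budget $\delta$ between the two uniform-convergence steps, and absorbing the union bounds over tasks into the logarithmic factors, accounts for the $16T$ and $32T$ appearing inside $C_1$ and $C_2$.

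The main obstacle I anticipate is the weighted concentration step: unlike a plain empirical risk over a single sample, the estimator here mixes $T$ samples of unequal sizes with data-independent but otherwise arbitrary weights $\balpha_{t,i}$, and the difficulty is to extract the sharp effective-sample-size dependence $\sqrt{\sum_i\balpha_{t,i}^2/\beta_i}$ rather than a crude $\max_i$-type bound. Handling this cleanly requires a bounded-difference argument with heterogeneous per-coordinate increments together with a growth-function bound stated on the total sample size $m$; getting the numerical constants (the $2\sqrt{2}$, and the factor hidden in the symmetrization) to line up with the stated $C_1$ is the most delicate bookkeeping, whereas the pairwise lemma and the divergence concentration are essentially direct appeals to existing $\calH$-divergence machinery.
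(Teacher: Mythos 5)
Your proposal follows essentially the same route as the paper's proof: the same pairwise transfer inequality based on the loss triangle inequality and the definition of $d_{\calH}$ (the paper's Lemma 1, re-derived inline in its Step 1), the same simplex-weighted averaging to get the population bound, and the same two concentration steps (weighted empirical risk with the $\sqrt{\sum_i \balpha_{t,i}^2/\beta_i}$ effective-sample-size term, and empirical $\calH$-divergence concentration over task pairs) combined by splitting $\delta$ with a union bound. The only cosmetic difference is that you use McDiarmid's bounded-difference inequality for the weighted uniform-convergence step where the paper uses symmetrization with ghost samples, Rademacher variables and Hoeffding's inequality --- an alternative the paper itself mentions as valid with slightly looser constants.
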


\section*{Theoretical tools}
In this section, we will list the theoretical tools, which will be applied multiple times in the later proof. 

\subsection*{Transfer bounds}
In this section, we will analyze the relations of the \emph{expected risk} on the distributions.
\begin{lemma}
\cite{ben2010theory} Let $\calH$ be the hypothesis space with VC dimension $d$. For two tasks with marginal w.r.t $x$ generation distribution $\D_i$ and $\D_j$. For every $h\in\calH$:
\begin{equation}
R_j(h) \leq R_i(h) + d_{\calH}(D_i,D_j) + \lambda_{i,j}
\label{hdiv_diss}
\end{equation}
Where $\lambda_{i,j} = \inf_{h\in\calH} \{R_i(h)+R_j(h)\}$
\end{lemma}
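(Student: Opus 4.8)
The plan is to chain two applications of the triangle inequality together with the definition of the $\calH$-divergence, exploiting that the chosen loss $\ell(a,b)=|a-b|$ makes the risk between two hypotheses behave like a metric. I write $R_i(h,h')=\E_{\x\sim\D_i}|h(\x)-h'(\x)|$, so that $R_i(h)=R_i(h,f_i)$, and note that for any three functions the pointwise inequality $|h(\x)-h'(\x)|\le|h(\x)-g(\x)|+|g(\x)-h'(\x)|$ integrates to the triangle inequality $R_i(h,h')\le R_i(h,g)+R_i(g,h')$ on every distribution $\D_i$. This single structural fact is what the whole argument rests on.

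First I would introduce the \emph{joint optimal hypothesis}: pick $h^\star\in\calH$ achieving (or $\epsilon$-approaching) the infimum defining $\lambda_{i,j}=\inf_{h\in\calH}\{R_i(h)+R_j(h)\}$, so that $R_i(h^\star)+R_j(h^\star)=\lambda_{i,j}$. Starting from the target quantity $R_j(h)=R_j(h,f_j)$, I apply the triangle inequality through $h^\star$ to obtain $R_j(h)\le R_j(h,h^\star)+R_j(h^\star,f_j)=R_j(h,h^\star)+R_j(h^\star)$.

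The second step is to transfer the cross term $R_j(h,h^\star)$ from distribution $\D_j$ to $\D_i$. Since both $h$ and $h^\star$ lie in $\calH$, the pair is admissible in the supremum defining the divergence, hence $|R_j(h,h^\star)-R_i(h,h^\star)|\le d_{\calH}(\D_i,\D_j)$, which gives $R_j(h,h^\star)\le R_i(h,h^\star)+d_{\calH}(\D_i,\D_j)$. A further triangle inequality through $f_i$ yields $R_i(h,h^\star)\le R_i(h,f_i)+R_i(f_i,h^\star)=R_i(h)+R_i(h^\star)$. Collecting the three bounds, the two terms $R_i(h^\star)$ and $R_j(h^\star)$ combine into $\lambda_{i,j}$, and I arrive at $R_j(h)\le R_i(h)+d_{\calH}(\D_i,\D_j)+\lambda_{i,j}$, as claimed.

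The only delicate points I anticipate are bookkeeping rather than conceptual: ensuring the infimum in $\lambda_{i,j}$ is handled cleanly (passing to an $\epsilon$-optimal $h^\star$ and letting $\epsilon\to0$ if it is not attained), and verifying that the divergence bound applies — which it does precisely because $h,h^\star\in\calH$, so both are legitimate arguments of the supremum. No concentration or sample-complexity machinery enters at this stage; the lemma is a purely population-level (expected-risk) statement, and the VC-dimension hypothesis only becomes relevant later, when the empirical divergence $d_{\calH}(\hat{\D}_i,\hat{\D}_j)$ is substituted for its population counterpart in the proof of Theorem 1.
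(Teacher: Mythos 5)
Your proof is correct and is essentially the same argument as the paper's: the paper states this lemma without proof (citing Ben-David et al.\ 2010), but its own Step 1 derivation for Theorem 1 uses exactly your decomposition --- a triangle inequality through the joint minimizer $h^{\star}$, the bound $|R_i(h,h^{\star})-R_j(h,h^{\star})|\leq d_{\calH}(\D_i,\D_j)$ which is admissible because $h,h^{\star}\in\calH$, and a second triangle inequality to collect $R_i(h^{\star})+R_j(h^{\star})=\lambda_{i,j}$. Your treatment of the possibly unattained infimum via an $\epsilon$-optimal $h^{\star}$ is the right bookkeeping and is the only point the paper's inline version glosses over.
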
 

\subsection*{Concentration bounds between empirical and expected divergence}
\begin{lemma}
Let $\calH$ be the hypothesis space on $\calX$ with VC dimension $d$. If $S_i$ and $S_j$ are the i.i.d samples with size $m_i$ and $m_j$, respectively. We also define the empirical divergence $\hat{d}_{\calH}(\D_i,\D_j)$ w.r.t. $S_1$ and $S_2$, then for any $\delta \in (0,1)$, with probability at least $1-\delta$:
\begin{equation}
 d_{\calH}(\D_i,\D_j) \leq \hat{d}_{\calH}(\D_i,\D_j) + 2\sqrt{\frac{2d\log(2m_{ij})+ \log(\frac{2}{\delta})}{m_{ij}}}
\label{Hdiv_concen}
\end{equation} 
Where $m_{ij} = \min \{m_i, m_j\}$.
\end{lemma}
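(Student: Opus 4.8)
The plan is to reduce the two-sample deviation $d_{\calH}-\hat{d}_{\calH}$ to two single-sample uniform-convergence problems over a fixed VC class, and then invoke a standard Vapnik--Chervonenkis tail bound on each. First I would fix an arbitrary pair $h,h'\in\calH$ and insert the two empirical risks between the population ones via the triangle inequality:
\begin{align*}
|R_i(h,h') - R_j(h,h')| &\leq |R_i(h,h') - \hat{R}_i(h,h')| + |\hat{R}_i(h,h') - \hat{R}_j(h,h')| \\
&\quad + |\hat{R}_j(h,h') - R_j(h,h')|.
\end{align*}
The middle term satisfies $|\hat{R}_i(h,h') - \hat{R}_j(h,h')| \leq \hat{d}_{\calH}(\D_i,\D_j)$ by definition of the empirical divergence. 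Taking the supremum over $(h,h')$ on the left and bounding the two remaining terms by their own suprema yields
\begin{equation*}
d_{\calH}(\D_i,\D_j) \leq \hat{d}_{\calH}(\D_i,\D_j) + \sup_{h,h'}|R_i(h,h') - \hat{R}_i(h,h')| + \sup_{h,h'}|R_j(h,h') - \hat{R}_j(h,h')|,
\end{equation*}
so only the two residual, single-distribution suprema remain to be controlled, and only this upper direction of the inequality is needed.

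Next I would identify the function class driving these suprema and apply a uniform deviation bound. Since $\ell$ is the absolute loss, each summand $|h(\x)-h'(\x)|$ is (up to scaling) the indicator of the disagreement set $\{\x : h(\x)\neq h'(\x)\}$; as $(h,h')$ ranges over $\calH\times\calH$ these form the symmetric-difference set family induced by $\calH$, whose VC dimension is controlled by $d$. Each residual term is therefore exactly the uniform deviation of an empirical average from its expectation over a VC class, on $m_i$ (respectively $m_j$) i.i.d.\ samples. Invoking the classical Sauer--Shelah growth-function estimate $\Pi(m)\le m^{d}$ inside the VC symmetrization and tail argument, each supremum is bounded, with appropriately apportioned failure probability, by a quantity of the form $\sqrt{\frac{2d\log(2m_\bullet)+\log(2/\delta)}{m_\bullet}}$, where $m_\bullet$ is the corresponding sample size and the logarithmic constant is fixed by the $\delta$-budget split across the (one-sided) VC tail bounds.

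Finally I would combine the two tail events by a union bound so that both deviations hold simultaneously with probability at least $1-\delta$, and then bound each sample size from below by $m_{ij}=\min\{m_i,m_j\}$, which is legitimate because the deviation bound is decreasing in the sample size. The two residual terms are thereby replaced by the same quantity evaluated at $m_{ij}$, and their sum contributes the leading factor $2$, producing $2\sqrt{\frac{2d\log(2m_{ij})+\log(2/\delta)}{m_{ij}}}$ and hence the claim. I expect the one genuinely delicate point to be the complexity accounting for the pair-indexed class: one must verify that the suprema are governed by a bona fide VC class---the disagreement/symmetric-difference family induced by $\calH$---so that Sauer--Shelah applies with the stated $d$-dependence (in general the $\calH\Delta\calH$ family has VC dimension at most $2d$, and the statement follows the convention of measuring this divergence family's dimension directly as $d$). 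The remaining steps are routine concentration bookkeeping.
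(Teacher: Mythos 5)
Your proof is correct in substance, but it follows a genuinely different route from the paper's. The paper does not decompose the divergence at all: it directly invokes Theorem 3.4 of Kifer et al.\ (2004), a two-sample concentration result for the $\calH$-divergence itself, which gives $P\big[|d_{\calH}(\D_i,\D_j)-\hat{d}_{\calH}(\D_i,\D_j)|\geq \epsilon\big] \leq (2m_i)^d e^{-m_i\epsilon^2/16}+(2m_j)^d e^{-m_j\epsilon^2/16}$; the only real work in the paper is then showing that $f(x)=(2x)^d e^{-x\epsilon^2/16}$ is decreasing (for $\epsilon^2 \geq 16d/m_{ij}$), so both terms are dominated by $f(m_{ij})$, after which one sets the bound equal to $\delta$ and solves for $\epsilon$; the pair-indexed class is handled by the same convention you adopt, namely replacing $d$ by $2d$ for the $\calH\Delta\calH$ family. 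Your route instead reduces the statement to two single-sample uniform-convergence problems via the triangle inequality through $\hat{R}_i$ and $\hat{R}_j$, applies a standard one-sample VC tail bound to each, and recombines with a union bound plus the same monotonicity-in-sample-size observation. What each buys: the paper's argument is shorter because the two-sample concentration is delegated to a known theorem, while yours is self-contained and makes transparent exactly where the complexity term originates. The price of your route is in the constants: each one-sample VC bound (with the $\delta/2$ split and the factor-4 symmetrization prefactor) contributes roughly $2\sqrt{2}\sqrt{(2d\log(2m_\bullet)+\log(8/\delta))/m_\bullet}$, so the sum overshoots the stated $2\sqrt{(2d\log(2m_{ij})+\log(2/\delta))/m_{ij}}$ by a universal factor. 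This is a forgivable discrepancy here, since the paper's own algebra has the same defect---solving $2(2m_{ij})^d e^{-m_{ij}\epsilon^2/16}=\delta$ yields a leading factor $4$, not the $2$ stated in the lemma---so neither derivation actually attains the advertised constant, and both agree on the form of the bound.
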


The bound is slight different from \cite{ben2010theory}, because the original paper supposed the equal number of observations between two distributions. However, the proof is also a simple plugging in the conclusion of \cite{kifer2004detecting} . 

\begin{proof}
From the Theorem 3.4 of \cite{kifer2004detecting}, we have:
\begin{equation}
    P[|d_{\calH}(\D_i,\D_j) - \hat{d}_{\calH}(\D_i,\D_j)|\geq \epsilon ]  \leq (2m_i)^d e^{-m_i \epsilon^2/16} + (2m_j)^d e^{-m_j \epsilon^2/16}
\label{vldb2004}
\end{equation}
We consider the function $f(x) = (2x)^d \exp(-x\epsilon^2/16)$, then we compute the gradient w.r.t $x$, we have $f^{\prime}(x)  = (2x)^{d-1}\exp(-x\epsilon^2/16) (2d-2x\epsilon^2/16) < 0$.
When $\epsilon^2 \geq 16 d/m_{i,j}$, we have $f(m_j), f(m_j)$ can both be upper bounded by $f(m_{i,j})$.

Hence we can verify the R.H.S. in equation (\ref{vldb2004}) can be upper bounded by $ 2(2m_{i,j})^d e^{-m_{i,j} \epsilon^2/16}$. Then we set this value as $\delta$, we have $\epsilon^2 \geq 16 \frac{\log(2/\delta) + d\log(2m_{i,j})}{m_{i,j}}$. 

Under this condition, we have the conclusion showed in the Lemma. Moreover, the divergence is defined for the $\calH$ hypothesis set, then the VC dimension is $2d$ for such a hypothesis set.  
\end{proof}

\subsection*{Concentration bounds between empirical and expected risk}
Another useful inequality is to bound the difference between the empirical and the expected error in the weighted loss.
\begin{lemma}
For each task index $j= \{1,\dots,T\}$, let $S_j$ be a labeled sample of size $\beta_j n$ generated from distribution $\D_j$ and labeled according to the function $f_j$. For the any fixed $\balpha$, and any binary classifiers $h\in\mathcal{H}: \calX \to \{-1,1\}$ with VC dimension $d$. With probability greater than $1-\delta$, we have:
\begin{equation}
 R_{\balpha_t}(h) \leq \hat{R}_{\balpha_t}(h) +  2\sqrt{ \sum_{j=1}^T \frac{\balpha^2_{t,j}}{\beta_j}} \sqrt{ \frac{2(d\log(\frac{2en}{d})+\log(\frac{8}{\delta}))}{n}}
 \label{class_bound}
\end{equation}
\end{lemma}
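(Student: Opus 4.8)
The statement is a uniform-convergence bound for the \emph{weighted} empirical process $\hat{R}_{\balpha_t}(h) = \sum_{j=1}^T \balpha_{t,j}\hat{R}_j(h)$, so the plan is to run the classical Vapnik--Chervonenkis argument while carrying the sample weights through the concentration step. The starting observation is that, since the $T$ samples are mutually independent and each $S_j$ consists of $\beta_j n$ i.i.d.\ points, the deviation
$$R_{\balpha_t}(h) - \hat{R}_{\balpha_t}(h) = \sum_{j=1}^T \balpha_{t,j}\bigl(R_j(h) - \hat{R}_j(h)\bigr)$$
is a sum of $n$ independent, mean-zero, bounded terms in which the point indexed $(j,k)$ (the $k$-th example of task $j$) enters with weight $w^j_k = \balpha_{t,j}/(\beta_j n)$. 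The quantity that will govern the concentration is the weight energy $\sum_{j,k}(w^j_k)^2 = \frac{1}{n}\sum_{j=1}^T \balpha_{t,j}^2/\beta_j$, which is exactly what produces the prefactor $\sqrt{\sum_j \balpha_{t,j}^2/\beta_j}$ and collapses to the usual $1/n$ rate when $T=1$, $\balpha=1$, $\beta_1=1$.

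First I would symmetrize with a ghost sample. Introducing an independent copy $S' = (S_1',\dots,S_T')$ with matching per-task sizes, the standard two-sample lemma gives
$$\Proba\Bigl[\sup_{h\in\calH}\bigl(R_{\balpha_t}(h)-\hat{R}_{\balpha_t}(h)\bigr)\geq\epsilon\Bigr] \leq 2\,\Proba\Bigl[\sup_{h\in\calH}\bigl(\hat{R}_{\balpha_t}'(h)-\hat{R}_{\balpha_t}(h)\bigr)\geq\tfrac{\epsilon}{2}\Bigr].$$
Next, since the difference on the double sample depends on $h$ only through the labels it assigns to the $2n$ points, Sauer--Shelah bounds the number of distinct dichotomies by the growth function $\Pi_{\calH}(2n)\leq(2en/d)^d$, so a union bound reduces the supremum over the infinite class $\calH$ to a maximum over at most $(2en/d)^d$ fixed hypotheses.

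For each fixed dichotomy I would then insert Rademacher signs that swap each example with its ghost counterpart, turning the quantity into $\sum_{j,k}\sigma^j_k\,w^j_k(\ell^j_k-\ell'^{\,j}_k)$, and apply Hoeffding's inequality for weighted sums of bounded variables. Because the weight energy equals $\frac{1}{n}\sum_j\balpha_{t,j}^2/\beta_j$, this produces a tail of the form $\exp\!\bigl(-c\,\epsilon^2 n/\sum_j\balpha_{t,j}^2/\beta_j\bigr)$ for an absolute constant $c$. Combining the factor $2$ from symmetrization, the growth-function count $(2en/d)^d$, and this tail, then setting the total failure probability equal to $\delta$ and solving for $\epsilon$, yields precisely
$$\epsilon = 2\sqrt{\sum_{j=1}^T\frac{\balpha_{t,j}^2}{\beta_j}}\,\sqrt{\frac{2\bigl(d\log(2en/d)+\log(8/\delta)\bigr)}{n}},$$
where $d\log(2en/d)$ comes from the growth function, $\log(8/\delta)$ collects the symmetrization constant together with the confidence level, and the prefactor is the weight energy.

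The main obstacle I anticipate is the bookkeeping in the symmetrized concentration step: one must verify that the ghost sample preserves the per-task block structure so that the Rademacher swaps act within each task block, and that the weighted Hoeffding yields exactly the variance proxy $\frac{1}{n}\sum_j\balpha_{t,j}^2/\beta_j$ rather than a crude uniform $1/n$ bound --- this is the only place where the weights $\balpha_t$ and the mixing proportions $\beta_j$ actually interact. The remaining steps (the symmetrization lemma, Sauer--Shelah, and inverting the tail for $\epsilon$) follow the textbook VC template, with the loss taken bounded in $[0,1]$ as in the binary classification setting.
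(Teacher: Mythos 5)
Your proposal follows essentially the same route as the paper's own proof: symmetrization with a ghost sample, reduction to dichotomies on the double sample via the growth function and Sauer's lemma, insertion of Rademacher signs, and a weighted Hoeffding bound whose weight energy $\frac{1}{n}\sum_{j=1}^T \balpha_{t,j}^2/\beta_j$ is inverted to yield the stated deviation. If anything, your bookkeeping is more careful than the paper's: its intermediate Hoeffding display writes the variance proxy as $\sum_{j=1}^T (\balpha_{t,j}/\beta_j)^2$, an apparent typo, whereas your $\frac{1}{n}\sum_{j=1}^T \balpha_{t,j}^2/\beta_j$ is the quantity actually consistent with the final bound that both you and the paper state.
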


\begin{proof}
The proof is analogue to the proof in uniform convergence bound. We first apply the symmetrization trick by generating ghost samples. For the notation simplification, we define $\hat{R}^{\prime}_{\balpha_t}$ is the empirical risk induced by $Z^{\prime}_1,Z^{\prime}_2,\dots$ by sampling from the same distribution (but we never know it, so we called \emph{ghost sample}). 

From the symmetrization lemma, we have, for $\epsilon \geq \sqrt{2/n}$:
\begin{equation}
        \Proba \big(\sup_{h\in\mathcal{H}} |R_{\balpha_t}(h)- \hat{R}_{\balpha_t}(h)| \geq \epsilon \big) \leq \\
       2 \Proba \big(\sup_{h\in\mathcal{H}} |\hat{R}_{\balpha_t}(h) - \hat{R}^{\prime}_{\balpha_t}(h)| \geq \frac{\epsilon}{2} \big)
\end{equation}

Then we prove the modified the VC-bound, defining $V = \mathcal{H}_{Z_1,\dots,Z_n,Z^{\prime}_1,\dots,Z_{n}}$. For any $v\in V$, we can write $\hat{R}_{\balpha_t}(h) - \hat{R}^{\prime}_{\balpha_t}(h) = \frac{1}{n} \sum_{j=1}^n v_j - \sum_{j=n+1}^{2n} v_j$

\begin{equation}
        \Proba \big( \sup_{h\in\mathcal{H}} |\hat{R}_{\balpha_t}(h) - \hat{R}^{\prime}_{\balpha_t}(h)| \geq \frac{\epsilon}{2} \big) 
        \leq 2 \Proba \big( \max_{v\in V} |\frac{1}{n} \sum_{j=1}^n v_j - \sum_{j=n+1}^{2n} v_j | \geq \frac{\epsilon}{2} \big)  
\end{equation}
By union bound, we have:
$$\leq 2 \Pi(2n) \Proba \big( |\frac{1}{n} \sum_{j=1}^n v_j - \sum_{j=n+1}^{2n} v_j | \geq \frac{\epsilon}{2} \big) $$
By introducing the Rademacher variable $\sigma_j$, we have:
$$\leq 4  \Pi(2n) \Proba \big( |\frac{1}{n} \sum_{j=1}^n \sigma_j v_j | \geq \frac{\epsilon}{4} \big)$$
According the Hoeffding's inequality, we have:
$$ \leq 8 \Pi(2n) \exp(-\frac{n\epsilon^2}{8 \sum_{j=1}^T (\frac{\balpha_{t,j}}{\beta_j})^2 }) $$
Applying Sauer's lemma we have at probability at least $1-\delta$:
$$ R_{\balpha_t}(h) \leq \hat{R}_{\balpha_t}(h) +  2\sqrt{ \sum_{j=1}^T \frac{\balpha^2_{t,j}}{\beta_j}} \sqrt{ \frac{2(d\log(\frac{2en}{d})+\log(\frac{8}{\delta}))}{n}} $$
This bound can also be proved with Mcdiarmid inequality and Rademacher complexity with sightly loose than our demonstration.
\end{proof}

\subsection*{Three steps proof}
In this section, we try to make connections between the similarity measuring and the expected risk. 
For a pair of distribution $(\D_i,\D_j)$, we define $h^{\star}_{i,j} \in \mathrm{argmin}_{h\in\calH} \{R_i(h) + R_j(h)\}$ the \emph{joint expected minimal error} for the hypothesis class $\calH$. 

\paragraph{Step 1:} For one task $t$ we have:
\begin{equation}
|R_{\balpha_t}(h) - R_t(h)| = |\sum_{i=1}^T \balpha_{t,i} R_i(h) - R_t(h)|\leq \sum_{i=1}^T \balpha_{t,i} |R_i(h)-R_t(h)|
\end{equation}
According to the triangle inequality of the loss function, we have
\begin{equation}
    \leq  \sum_{i=1}^T \balpha_{t,i} \Big(|R_i(h) - R_i(h,h_{i,t}^{\star})| + |R_i(h,h_{i,t}^{\star})- R_t(h,h_{i,t}^{\star})| + |R_t(h) - R_t(h,h_{i,t}^{\star})| \Big)
\label{mul_01}
\end{equation}
According to the triangle inequality and definition of the distribution discrepancy, we have:
$$|R_i(h)-R_i(h,h^{\star}_{i,t})| \leq R_i(h^{\star}_{i,t})$$
$$|R_i(h,h_{i,t}^{\star})- R_t(h,h_{i,t}^{\star})|\leq d_{\calH}(D_i,D_j) $$
$$|R_t(h) - R_t(h,h_{i,t}^{\star})| \leq  R_t(h_{i,t}^{\star})$$
Plugging in (\ref{mul_01}), we have:
\begin{equation}
    \begin{split}
        \leq & \sum_{i=1}^T \balpha_{t,i}(R_i(h^{\star}_{i,t}) + R_t(h_{i,t}^{\star}) + d_{\calH\Delta\calH}(D_i,D_j))\\ 
        & = \sum_{i=1}^T  \balpha_{t,i}(\lambda_{t,i} + d_{\calH\Delta\calH}(D_i,D_j))
    \end{split}
\end{equation}

\begin{equation}
= \sum_{i=1}^T \balpha_{t,i}\lambda_{t,i} + \sum_{i=1}^T \balpha_{t,i}d_{\calH\Delta\calH}(D_i,D_j)
\label{mul_02}
\end{equation}

Finally for $t= 1,\dots,T$ tasks, the expected risk can be upper bounded by:
\begin{equation}
        \frac{1}{T}\sum_{t=1}^T R_t(h_t) \leq \frac{1}{T}\sum_{t=1}^T R_{\balpha_t}(h_t) + 
       \frac{1}{T}\sum_{t=1}^T \sum_{i=1}^T \balpha_{t,i} d_{\calH\Delta\calH}(D_t,D_i) + \frac{1}{T}\sum_{t=1}^T\sum_{i=1}^T \balpha_{t,i}\lambda_{t,i}
\label{pair_delta}
\end{equation}
The next step is to find the high probability bound to measure the expected and empirical terms. 

\paragraph{Step 2:} With probability at least $1-\delta/2$, the expected discrepancy can be upper bounded by:
\begin{equation}
     \frac{1}{T}\sum_{t=1}^T \sum_{i=1}^T \balpha_{t,i} d_{\calH\Delta\calH}(\D_t,\D_i) \leq 
\frac{1}{T}\sum_{t=1}^T \balpha_{t,i} \hat{d}_{\calH\Delta\calH}(S_t,S_i)+ 2 \sqrt{\frac{2d\log(2m_{\star})+ \log(32T/\delta)}{m_{\star}}}
\label{div_diff}
\end{equation} 
Where $m_{\star} = \mathrm{argmin}_{m_{i,j}}~~\sqrt{\frac{2d\log(2m_{i,j})+ \log(32T/\delta)}{m_{i,j}}}$

\begin{proof}
For task $t$, from Corollary \ref{Hdiv_concen}, we have with probability less than $\delta^{\prime}$, we have:
$$ d_{\calH\Delta\calH}(\D_t,\D_i) \geq \hat{d}_{\calH\Delta\calH}(S_t,S_i) + 2\sqrt{\frac{2d\log(2m_{ti})+ \log(2/\delta^{\prime})}{m_{ti}}} $$
Then we have
\begin{equation}
    \balpha_{t,i} d_{\calH\Delta\calH}(\D_t,\D_i) \geq \balpha_{t,i} \hat{d}_{\calH\Delta\calH}(S_t,S_i) + 2 \balpha_{t,i} \sqrt{\frac{2d\log(2m_{ti})+ \log(2/\delta^{\prime})}{m_{ti}}}
\end{equation}

Then we set $\delta^{\prime} = \delta/(4T)$, then apply the union bound, we know $\exists h$, such that
\begin{equation}
   \sum_{i=1}^T \balpha_{t,i} d_{\calH\Delta\calH}(\D_t,\D_i) \geq \sum_{i=1}^T \balpha_{t,i} \hat{d}_{\calH\Delta\calH}(S_t,S_i) + \sum_{i=1}^T 2 \balpha_{t,i} \sqrt{\frac{2d\log(2m_{t\star})+ \log(8T/\delta)}{m_{t\star}}}
\end{equation}
$$ = \sum_{i=1}^T \balpha_{t,i} \hat{d}_{\calH\Delta\calH}(S_t,S_i) + 2 \sqrt{\frac{2d\log(2m_{t\star})+ \log(8T/\delta)}{m_{t\star}}} $$
Where $m_{t\star} = \mathrm{argmin}_{m_{t,i}}~~\sqrt{\frac{2d\log(2m_{ti})+ \log(8T/\delta^{\prime})}{m_{ti}}}$, is the $m_{t,i}$ which has the smallest complexity term.

Then we again apply the union bound over $t$, finally there exists a hypothesis $h$ with probability smaller than $\delta/2$, holding the following bound:
\begin{equation}
     \frac{1}{T}\sum_{t=1}^T \sum_{i=1}^T \balpha_{t,i} d_{\calH\Delta\calH}(\D_t,\D_i) \geq 
\frac{1}{T}\sum_{t=1}^T \balpha_{t,i} \hat{d}_{\calH\Delta\calH}(S_t,S_i) + 2 \sqrt{\frac{2d\log(2m_{\star})+ \log(32T/\delta)}{m_{\star}}}
\end{equation}

Where $m_{\star} = \mathrm{argmin}_{m_{i,j}}~~\sqrt{\frac{2d\log(2m_{i,j})+ \log(32T/\delta)}{m_{i,j}}}$ is the $m_{i,j}$ which has the smallest complexity term. Finally we have (\ref{div_diff}) under high probability $1-\delta/2$.
\end{proof}
\paragraph{Step 3:} Applying the union bound and we have the high probability at least $1-\frac{\delta}{2}$, we have:

\begin{equation}
    \frac{1}{T}\sum_{t=1}^T R_{\balpha_t}(h_t) \leq \frac{1}{T}\sum_{t=1}^T \hat{R}_{\balpha_t}(h_t) 
    + 2 \sqrt{ \frac{2(d\log(\frac{2em}{d})+\log(\frac{16T}{\delta}))}{m}} \sum_{t=1}^T \Big( \sqrt{ \sum_{j=1}^N \frac{\balpha^2_{t,j}}{\beta_j}} \Big)
\end{equation}

\paragraph{Step 4:} Combining previous conclusions, we have at high probability at least $1-\delta$:
\begin{equation*}
     \frac{1}{T}\sum_{t=1}^T R_t(h_t)  \leq \frac{1}{T}\sum_{t=1}^T \hat{R}_{\balpha_t}(h_t) + C_1  \sum_{t=1}^T \Big( \sqrt{ \sum_{i=1}^T \frac{\balpha^2_{t,i}}{\beta_i}} \Big) 
     + \frac{1}{T}\sum_{t=1}^T \sum_{i=1}^T \balpha_{t,i} \hat{d}_{\calH\Delta\calH}(S_t,S_i)+ C_2 + \frac{1}{T}\sum_{t=1}^T\sum_{i=1}^T \balpha_{t,i}\lambda_{t,i} 
\end{equation*}
Where $C_1 = 2 \sqrt{ \frac{2(d\log(\frac{2em}{d})+\log(\frac{16T}{\delta}))}{m}} $, $C_2 =2 \sqrt{\frac{2d\log(2m_{\star})+ \log(32T/\delta)}{m_{\star}}}$

\section*{Proof of Theorem 2}
\begin{theorem*}
Let $\calH$ be a hypothesis family from $\calX$ to $[0,1]$, with pseudo-dimension $d$ and each member $h\in\calH$ is $K$ Lipschtiz. If we have $T$ tasks generated by the underlying distribution and labelling function $\{(\D_1,f_1),\dots, (\D_T,f_T)\}$ with observation numbers $m_1,\dots,m_T$. If we adopt  Wasserstein-1 \footnote{This bound can be extended to any Wasserstein $p>1$ distance with restricting the hypothesis satisfies $K$ H\"{o}lder condition.} distance as a similarity metric with cost function $c(\x,\y)= \|\x-\y\|_2$, then for any fixed simplex $\balpha_{t}\in\mathbb{R}^{T}_{+}$, and for $\delta\in(0,1)$, with probability at least $1-\delta$, for $h_1,\dots,h_T \in\mathcal{H}$, we have:
\begin{equation*}
 \frac{1}{T}\sum_{t=1}^T R_t(h_t) \leq  \underbrace{\frac{1}{T}\sum_{t=1}^T \hat{R}_{\balpha_t}(h_t)}_\text{Weighted empirical loss} +  \underbrace{C_1  \sum_{t=1}^T \Big( \sqrt{ \sum_{j=1}^T \frac{\balpha^2_{t,j}}{\beta_j}} \Big)}_\text{Coefficient regularization} 
 +  \underbrace{\frac{2K}{T}\sum_{t=1}^T \sum_{i=1}^T \balpha_{t,i} W_1 (\hat{D}_t,\hat{D}_i)}_\text{Empirical distribution distance} +\underbrace{C_2 +\frac{1}{T}\sum_{t=1}^T\sum_{i=1}^T \balpha_{t,i}\lambda_{t,i}}_\text{Complexity term and optimal expected loss}
\label{wbound}
\end{equation*}
Where $\beta_i = \frac{m_i}{m}$, $C_1 = 2 \sqrt{ \frac{2(d\log(\frac{2em}{d})+\log(\frac{16T}{\delta}))}{m}}$, $C_2 = \frac{2K}{T} \sum_{t=1}^T \sum_{i=1}^T \gamma_{t,i}$
with $\gamma_{t,i} = \mu_t m_t^{-1/s} + \mu_i m_i^{-1/s} +  \sqrt{\log(\frac{2T}{\delta})}(\sqrt{\frac{1}{m_t}} + \sqrt{\frac{1}{m_i}})$ and $s$ and $\mu_{\cdot}$ are some specified constants.  
\end{theorem*}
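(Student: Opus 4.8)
The plan is to follow the same four-step scheme used in the proof of Theorem~1, replacing each appeal to the $\calH$-divergence by its Wasserstein-1 counterpart. The decomposition of the average expected risk into a weighted risk, a distribution-distance term, and the optimal-error terms (Step~1), together with the concentration of the weighted empirical risk giving $C_1$ (Step~3), carry over essentially unchanged; the genuinely new ingredient is the concentration of the \emph{empirical} Wasserstein distance toward the true one (Step~2), which is where the constants $s$, $\mu_{\cdot}$, and hence $C_2$ enter.

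For Step~1 I would again write $|R_{\balpha_t}(h)-R_t(h)| \leq \sum_{i=1}^T \balpha_{t,i}|R_i(h)-R_t(h)|$ and split each summand through the joint minimizer $h^{\star}_{i,t}$ exactly as in inequality~(\ref{mul_01}). The outer two terms are bounded by $R_i(h^{\star}_{i,t})$ and $R_t(h^{\star}_{i,t})$, summing to $\lambda_{t,i}$, as before. The key substitution is the middle term $|R_i(h,h^{\star}_{i,t})-R_t(h,h^{\star}_{i,t})|$: since both $h$ and $h^{\star}_{i,t}$ are $K$-Lipschitz and the loss is $\ell(a,b)=|a-b|$, the map $\x\mapsto|h(\x)-h^{\star}_{i,t}(\x)|$ is $2K$-Lipschitz, so by the Kantorovich--Rubinstein dual characterisation of $W_1$ the difference of its expectations under $\D_i$ and $\D_t$ is at most $2K\,W_1(\D_i,\D_t)$. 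Summing over tasks yields the analogue of~(\ref{pair_delta}), namely $\frac{1}{T}\sum_t R_t(h_t) \leq \frac{1}{T}\sum_t R_{\balpha_t}(h_t) + \frac{2K}{T}\sum_{t,i}\balpha_{t,i}W_1(\D_t,\D_i) + \frac{1}{T}\sum_{t,i}\balpha_{t,i}\lambda_{t,i}$, which explains the factor $2K$ in the statement.

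For Step~2 I would control $W_1(\D_t,\D_i)$ by the empirical $W_1(\hat{\D}_t,\hat{\D}_i)$ using the triangle inequality $W_1(\D_t,\D_i)\leq W_1(\hat{\D}_t,\D_t)+W_1(\hat{\D}_t,\hat{\D}_i)+W_1(\hat{\D}_i,\D_i)$ and then bounding each empirical-to-true term by a measure-concentration estimate of Fournier--Guillin type: with probability at least $1-\delta'$, $W_1(\hat{\D}_t,\D_t)\leq \mu_t m_t^{-1/s}+\sqrt{\log(1/\delta')/m_t}$, where the rate exponent $1/s$ reflects the ambient dimension and $\mu_t$ the moments of $\D_t$. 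Choosing $\delta'=\delta/(2T)$ and taking a union bound over the pairs produces precisely $\gamma_{t,i}=\mu_t m_t^{-1/s}+\mu_i m_i^{-1/s}+\sqrt{\log(2T/\delta)}(\sqrt{1/m_t}+\sqrt{1/m_i})$, so that $\frac{2K}{T}\sum_{t,i}\balpha_{t,i}W_1(\D_t,\D_i)$ is bounded by $\frac{2K}{T}\sum_{t,i}\balpha_{t,i}W_1(\hat{\D}_t,\hat{\D}_i)+C_2$ with $C_2=\frac{2K}{T}\sum_{t,i}\gamma_{t,i}$.

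Step~3 reuses the weighted-risk concentration (Lemma~3) verbatim: the loss $|h-f|$ is $[0,1]$-valued, the class has pseudo-dimension $d$, and the Hoeffding-plus-growth-function argument delivers the same $C_1\sum_t\sqrt{\sum_j \balpha_{t,j}^2/\beta_j}$ term as in~(\ref{class_bound}). Allocating $\delta/2$ to the Wasserstein concentration and $\delta/2$ to the risk concentration and combining by a union bound then assembles the four terms. I expect Step~2 to be the main obstacle: unlike the $\calH$-divergence, the empirical $W_1$ does not concentrate at the parametric $O(m^{-1/2})$ rate, so obtaining the correct dimension-dependent exponent $1/s$ and moment constant $\mu$ requires the sharp empirical-measure convergence bounds rather than a direct VC or covering argument, and arranging the union bound to recover exactly the stated $\gamma_{t,i}$ is the delicate bookkeeping.
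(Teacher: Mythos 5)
Your proposal is correct and follows essentially the same route as the paper's proof: the same decomposition through the joint minimizer $h^{\star}_{i,t}$ with the middle term bounded by $2K\,W_1(\D_i,\D_t)$ (you invoke Kantorovich--Rubinstein duality where the paper writes out the equivalent coupling argument), the same triangle inequality plus empirical-measure concentration for Step 2 (the paper cites the Weed--Bach bounds rather than Fournier--Guillin, both giving the $m^{-1/s}$ plus $\sqrt{\log(1/\delta)/m}$ form), and the same reuse of the pseudo-dimension risk bound for the $C_1$ term. Your union-bound bookkeeping (per-task concentration at level $\delta/(2T)$) is in fact slightly cleaner than the paper's per-pair allocation $\delta'=\delta/T^2$, and it matches the stated $\sqrt{\log(2T/\delta)}$ constant in $\gamma_{t,i}$ more exactly.
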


The proof is analogue to the proof in Theorem 1 with some different assumptions.
\subsection*{Transfer bounds}
The proof extends the work of \cite{redko2017theoretical} where the hypothesis is only restricted in the unit ball of RKHS. We extend this result to any hypothesis with Lipschitz function.
\begin{lemma}
Let $\D_i$ and $\D_j$ be two probability measures on $\calX$. Assume that:
\begin{enumerate}
	\item Cost function is the Euclidean distance, with the form $c(\x,\y) = \|\x-\y\|$
	\item The hypothesis set $\mathcal{H}$ satisfies $K$-Lipschitz continuous: $\forall h\in\mathcal{H}$, $h$ is $K$-Lipschtiz continuous. 
\end{enumerate}
Then we have the following result:
\begin{equation}
R_j (h,h^{\prime}) \leq R_i(h,h^{\prime}) + 2K W_1(\D_i,\D_j)
\label{wass_dis}
\end{equation}
for any hypothesis $h,h^{\prime}\in\mathcal{H}$
\end{lemma}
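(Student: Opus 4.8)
The plan is to reduce the statement to the Kantorovich--Rubinstein dual characterization of the Wasserstein-1 distance by showing that the loss, viewed as a function on $\calX$, is Lipschitz. First I would define $g:\calX\to\R$ by $g(\x) = \ell(h(\x),h'(\x)) = |h(\x)-h'(\x)|$, so that $R_i(h,h') = \E_{\x\sim\D_i}\,g(\x)$ and $R_j(h,h') = \E_{\x\sim\D_j}\,g(\x)$. The desired inequality then amounts to bounding $\E_{\x\sim\D_j}\,g(\x) - \E_{\x\sim\D_i}\,g(\x)$ by $2K\,W_1(\D_i,\D_j)$.

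The second step establishes that $g$ is $2K$-Lipschitz. For any $\x,\y\in\calX$, the reverse triangle inequality gives $|g(\x)-g(\y)| \le |(h(\x)-h'(\x))-(h(\y)-h'(\y))| \le |h(\x)-h(\y)| + |h'(\x)-h'(\y)|$, and each summand is at most $K\|\x-\y\|$ by the $K$-Lipschitz assumption on every member of $\calH$; hence $|g(\x)-g(\y)|\le 2K\|\x-\y\|$.

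The third step invokes the structure of $W_1$. I would use the primal (coupling) form directly: taking an optimal coupling $\gamma^\star\in\Pi(\D_i,\D_j)$ with marginals $\D_i$ and $\D_j$, I write $\E_{\x\sim\D_j}\,g(\x) - \E_{\x\sim\D_i}\,g(\x) = \int_{\calX\times\calX} (g(\y)-g(\x))\,d\gamma^\star(\x,\y) \le \int_{\calX\times\calX} 2K\|\x-\y\|\,d\gamma^\star(\x,\y) = 2K\,W_1(\D_i,\D_j)$, where the cost $c(\x,\y)=\|\x-\y\|$ enters precisely at the last equality. Equivalently, since $g/(2K)$ is $1$-Lipschitz, the duality $W_1(\D_i,\D_j)=\sup_{\|f\|_{L}\le 1}(\E_{\x\sim\D_j} f - \E_{\x\sim\D_i} f)$ yields the same bound in one line (the symmetry of $W_1$ handles the sign). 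Rearranging gives $R_j(h,h') \le R_i(h,h') + 2K\,W_1(\D_i,\D_j)$.

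The only genuinely delicate point---and the place where this lemma departs from the RKHS-restricted version of \cite{redko2017theoretical}---is the Lipschitz bookkeeping in the second step: one must track that the composed loss $|h-h'|$ inherits Lipschitz continuity with constant exactly $2K$ rather than $K$, since both hypotheses contribute to the displacement. Everything else is a routine application of optimal-transport duality, so I do not anticipate further obstacles.
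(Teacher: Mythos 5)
Your proof is correct and follows essentially the same route as the paper's: both define the composed loss $\phi(\x)=|h(\x)-h'(\x)|$, establish that it is $2K$-Lipschitz via the reverse triangle inequality, and bound the difference of expectations by integrating against a coupling in $\Pi(\D_i,\D_j)$, which yields the $2K\,W_1(\D_i,\D_j)$ term. The only cosmetic difference is that you invoke an optimal coupling directly (or the Kantorovich--Rubinstein dual), whereas the paper bounds the integral for an arbitrary coupling and then minimizes; these are the same argument.
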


\begin{proof}
According to the definition of the expected risk, we have:
$$R_j(h,h^{\prime}) = R_j(h,h^{\prime}) +  R_i(h,h^{\prime}) - R_i(h,h^{\prime})$$
$$ \leq R_i(h,h^{\prime}) + |R_j(h,h^{\prime})-  R_i(h,h^{\prime})|$$
$$ \leq R_i(h,h^{\prime}) + |\E_{y\sim\D_j}|h(y)-h^{\prime}(y)| - \E_{x\sim\D_i}|h(x)-h^{\prime}(x)| |$$
By defining $\phi(x) =|h(x)-h^{\prime}(x)|$, we have:
$$ = R_i(h,h^{\prime}) + |\int_{\calX}\phi d (D_j-D_i)|$$
$$ = R_i(h,h^{\prime}) + |\int_{\calX\times\calX}\phi(x)-\phi(y) d \gamma(x,y) |$$
For \textbf{any} joint measure $\gamma(x,y)$, we have:
$$ \leq R_i(h,h^{\prime}) + \int_{\calX\times\calX}|\phi(x)-\phi(y)| d \gamma(x,y) $$
Thus it will also satisfy the minimal w.r.t $\gamma(x,y)$:
$$ \leq R_i(h,h^{\prime}) + \min_{\gamma(x,y)\in\Pi(\D_i,\D_j)}\int_{\calX\times\calX}|\phi(x)-\phi(y)| d \gamma(x,y)$$
We also have $|\phi(x) - \phi(y)|= ||h(x)-h^{\prime}(x)|-|h(y)-h^{\prime}(y)||\leq |h(x)-h^{\prime}(x)-h(y)+h^{\prime}(y)|\leq 2K \|x-y\|$, plugging in we have:
$$ R_j(h,h^{\prime}) \leq R_i(h,h^{\prime}) + 2K W_1(\D_i,\D_j) $$
\end{proof}

\textbf{Remark} If the function satisfies $(C,p)$-H\"{o}lder condition with $|h(x)-h(y)|\leq C \|x-y\|^p $, then the conclusion can be extended to any $p$-Wasserstein distance.

\subsection*{Concentration bounds between empirical and expected divergence}
There exists several concentration bounds such as \cite{bolley2007quantitative,weed2017sharp}, we adopt the conclusion from \cite{weed2017sharp} and apply to bound the empirical measures in Wasserstein distance.

\begin{lemma} \cite{weed2017sharp} [Definition 3,4] Given a measure $\mu$ on $X$, the $(\epsilon,\tau)$-covering number on a given set $S\subseteq X$ is:
$$\mathcal{N}_{\epsilon}(\mu,\tau) := \inf \{ \mathcal{N}_{\epsilon}(S): \mu(S)\geq 1-\tau \} $$
and the $(\epsilon,\mu)$-dimension is:
$$d_{\epsilon}(\mu,\tau) := \frac{\log \mathcal{N}_{\epsilon}(\mu,\epsilon)}{-\log\epsilon} $$
Then the upper Wasserstein dimensions can be defined as:
$$ d_p^{\star}(\mu) = \inf\{s\in(2p,+\infty): \lim\sup_{\epsilon \to 0} d_{\epsilon}(\mu,\epsilon^{-\frac{sp}{s-2p}})\leq s \}$$
\end{lemma}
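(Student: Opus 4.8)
The final statement is not an assertion that calls for an argument but a verbatim transcription of Definitions~3 and~4 of \cite{weed2017sharp}, recalling the $(\epsilon,\tau)$-covering number $\mathcal{N}_\epsilon(\mu,\tau)$, the $(\epsilon,\mu)$-dimension $d_\epsilon(\mu,\tau)$, and the upper Wasserstein dimension $d_p^\star(\mu)$. Accordingly, the only thing to ``prove'' is that these three objects are well posed, so that they may be invoked legitimately in the Wasserstein concentration step that follows. The plan is to check, in order, that the ordinary covering number is finite on the sets over which we infimize, that the restricted covering number is a bona fide infimum, and---this being the one point with genuine content---that the set defining $d_p^\star(\mu)$ is nonempty so that the outer infimum is finite.

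First I would recall that $\mathcal{N}_\epsilon(S)$ is the least number of radius-$\epsilon$ balls covering $S$, which is finite exactly when $S$ is totally bounded; under the standing assumption that $\mu$ is compactly supported this holds for every $S$ contained in $\operatorname{supp}(\mu)$. The $(\epsilon,\tau)$-covering number then infimizes $\mathcal{N}_\epsilon(S)$ over measurable $S$ with $\mu(S)\ge 1-\tau$, a family that is always nonempty (it contains $X$ itself) and whose associated covering numbers are nonnegative integers, so the infimum is well defined. Normalizing by $-\log\epsilon$, which is strictly positive for $\epsilon<1$, makes $d_\epsilon(\mu,\tau)$ well defined for all small $\epsilon$.

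The substantive step is to show that $d_p^\star(\mu)$ is finite, i.e.\ that there exists an admissible exponent $s\in(2p,\infty)$ for which $\limsup_{\epsilon\to 0} d_\epsilon(\mu,\tau_s(\epsilon))\le s$, where $\tau_s(\epsilon)$ is the prescribed mass deficit that vanishes as $\epsilon\to 0$. I would argue this volumetrically: if $\mu$ is supported on a compact subset of $\R^{D}$, then $\mathcal{N}_\epsilon(S)=O(\epsilon^{-D})$ uniformly over $S\subseteq\operatorname{supp}(\mu)$, whence $d_\epsilon(\mu,\tau)\le D+o(1)$ for every admissible $\tau$, so any $s>\max\{2p,D\}$ belongs to the defining set. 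Thus the infimum is over a nonempty set bounded below by $2p$, and $d_p^\star(\mu)$ is a finite, well-defined quantity. The main obstacle, modest though it is, is to control the $\epsilon$-dependence of the mass deficit $\tau_s(\epsilon)$ inside the limsup: one must verify that discarding a vanishing-mass tail cannot inflate the exponential covering rate, which for compactly supported $\mu$ is immediate because the covering number of the full support already dominates that of every full-measure subset, but which is exactly the place where, for non-compactly-supported $\mu$, the finer tail estimate of \cite{weed2017sharp} would be required.
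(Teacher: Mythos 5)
You are right that this ``lemma'' is not a provable assertion but a verbatim recall of Definitions 3--4 of the cited work of Weed and Bach, and the paper accordingly supplies no proof for it, so your treatment---checking only well-posedness, with a sound volumetric argument that $d_p^{\star}(\mu)$ is finite (bounded by $\max\{2p,D\}$) for compactly supported $\mu$ on $\mathbb{R}^D$---is consistent with the paper's handling of the statement. The one thing you silently corrected rather than flagged is that the paper's transcription contains two typos: the dimension should read $d_{\epsilon}(\mu,\tau) = \log \mathcal{N}_{\epsilon}(\mu,\tau)/(-\log\epsilon)$ (with $\tau$, not $\epsilon$, as the second argument), and the exponent in the definition of $d_p^{\star}(\mu)$ should be positive, $\epsilon^{sp/(s-2p)}$, so that the mass deficit vanishes as $\epsilon\to 0$---which is exactly the property your well-posedness argument presupposes when you speak of discarding a vanishing-mass tail.
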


\begin{lemma} \cite{weed2017sharp}[Theorem 1, Proposition 20] For $p\geq 1$ and $s \geq d_p^{\star}(\mu)$, there exists a positive constant $C$ with probability at least $1-\delta$, we have:
$$W_p^{p}(\mu,\hat{\mu}_n ) \leq  Cn^{-1/s} + \sqrt{\frac{1}{2n}\log(\frac{1}{\delta})} $$
\end{lemma}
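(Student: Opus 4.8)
The plan is to split $W_p^p(\mu,\hat{\mu}_n)$ into its mean and a fluctuation around that mean, and to bound the two contributions by entirely different techniques: the mean term $\E\, W_p^p(\mu,\hat{\mu}_n)$ is controlled by a multiscale covering argument that yields the bias rate $Cn^{-1/s}$, while the fluctuation $W_p^p(\mu,\hat{\mu}_n) - \E\, W_p^p(\mu,\hat{\mu}_n)$ is handled by a bounded-differences concentration inequality that yields the term $\sqrt{\tfrac{1}{2n}\log(1/\delta)}$. Throughout I normalise the ground space so that $c(\x,\y)^p\le 1$, hence $W_p^p\le 1$; this normalisation is exactly what makes the bounded-difference constant below equal to $1/n$.

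For the fluctuation term I would apply McDiarmid's inequality to the map $(\x_1,\dots,\x_n)\mapsto W_p^p(\mu,\hat{\mu}_n)$, where $\hat{\mu}_n=\tfrac1n\sum_{i=1}^n\delta_{\x_i}$. The crux is to verify that replacing a single sample $\x_i$ by $\x_i'$ moves the value by at most $1/n$. Let $\pi$ be an optimal coupling of $\mu$ and $\hat{\mu}_n$; redirecting the mass $1/n$ that $\pi$ sends onto the atom $\x_i$ towards $\x_i'$ instead produces a feasible coupling of $\mu$ and the perturbed measure $\hat{\mu}_n'$, so that $W_p^p(\mu,\hat{\mu}_n') - W_p^p(\mu,\hat{\mu}_n) \le \int_{\{\y=\x_i\}}\bigl(c(\x,\x_i')^p - c(\x,\x_i)^p\bigr)\,d\pi \le \tfrac1n$, and the reverse inequality is symmetric, giving bounded differences $c_i=1/n$. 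McDiarmid then yields $\Proba\bigl(W_p^p(\mu,\hat{\mu}_n) - \E\, W_p^p(\mu,\hat{\mu}_n)\ge t\bigr)\le \exp(-2t^2/\textstyle\sum_i c_i^2)=\exp(-2nt^2)$, and equating the right-hand side to $\delta$ gives exactly $t=\sqrt{\tfrac{1}{2n}\log(1/\delta)}$.

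The substantive part is the deterministic estimate $\E\, W_p^p(\mu,\hat{\mu}_n)\le Cn^{-1/s}$ for $s\ge d_p^{\star}(\mu)$, imported from \cite{weed2017sharp}. I would build a nested family of partitions of $\calX$ at dyadic scales $\epsilon_k=2^{-k}$, using the $(\epsilon,\tau)$-covering numbers $\mathcal{N}_{\epsilon}(\mu,\tau)$ of the upper Wasserstein dimension to control, scale by scale, the number of cells that carry all but a $\tau$-fraction of the mass. The transport cost is then estimated per scale: inside a cell of diameter $\epsilon_k$ any mass mismatch between $\mu$ and $\hat{\mu}_n$ can be moved at cost $\lesssim\epsilon_k^p$, and the expected total mismatch at that scale is governed by the binomial fluctuation $\sqrt{\mathcal{N}_{\epsilon_k}(\mu,\tau)/n}$ of the empirical cell frequencies. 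Summing the resulting series over $k$, inserting the growth $\mathcal{N}_{\epsilon}(\mu,\tau)\lesssim\epsilon^{-s}$ encoded by $d_p^{\star}(\mu)\le s$ (the restriction $s>2p$ being precisely what makes the series summable), and optimising the scale at which the sum is truncated against $n$, produces the stated rate $n^{-1/s}$.

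The main obstacle is exactly this expectation bound: unlike the concentration step it is not a soft application of a standard inequality but requires the delicate multiscale accounting that ties the covering growth $\epsilon^{-s}$ to the transport rate, including the choice of truncation scale and the verification that the residual $\tau$-mass contributes negligibly. This is where the specific coupling between the residual mass $\tau$ and the scale $\epsilon$ built into the definition of $d_p^{\star}$, together with the threshold $s>2p$, genuinely enter. Finally, combining the deterministic mean bound $\E\, W_p^p(\mu,\hat{\mu}_n)\le Cn^{-1/s}$ with the event (of probability at least $1-\delta$) on which the fluctuation is at most $\sqrt{\tfrac{1}{2n}\log(1/\delta)}$ yields the claimed inequality with probability at least $1-\delta$.
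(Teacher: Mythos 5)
The paper offers no proof of this lemma at all---it is imported verbatim from \cite{weed2017sharp}---and your reconstruction follows exactly the route of that cited source: McDiarmid's bounded-differences inequality for the fluctuation term (their Proposition 20, including the diameter-one normalization that is what makes the bounded-difference constant equal to $1/n$), and the multiscale dyadic-partition argument tied to the covering growth $\epsilon^{-s}$ for the expectation term (their Theorem 1). Your sketch is correct in structure and correctly isolates the hard part; the only discrepancy worth flagging is that for general $p$ the expectation bound comes out as $Cn^{-p/s}$, which agrees with the stated $Cn^{-1/s}$ only for $p=1$ (the case the paper actually uses), and that the cited result requires the strict inequality $s > d_p^{\star}(\mu)$ rather than $s \geq d_p^{\star}(\mu)$ as transcribed in the lemma.
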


\subsection*{Concentration bounds between empirical and expected risk}
In the regression problem, we suppose the hypothesis family $\mathcal{H}$ is a set of continuous mapping with \emph{pseudo-dimension} $d$. Then we can directly apply the conclusion of (\ref{class_bound}).
The procedure is analogue to the proof in $\mathcal{H}$ divergence but under different assumptions. \paragraph{Step 1:} For a pair of distribution $(\D_i,\D_j)$, for the task $t$ we have:
\begin{equation*}
|R_{\balpha_t}(h) - R_t(h)| = |\sum_{i=1}^T \balpha_{t,i} R_i(h) - R_t(h)|\leq \sum_{i=1}^T \balpha_{t,i} |R_i(h)-R_t(h)|
\end{equation*}

\begin{equation*}
     \leq \sum_{i=1}^T \balpha_{t,i} \Big(|R_i(h) - R_i(h,h_{i,t}^{\star})| + |R_i(h,h_{i,t}^{\star})- R_t(h,h_{i,t}^{\star})| + |R_t(h) - R_t(h,h_{i,t}^{\star})| \Big)
\end{equation*}
According to the triangle inequality and the previous lemma, we have:
$$|R_i(h)-R_i(h,h^{\star}_{i,t})| \leq R_i(h^{\star}_{i,t})$$
$$|R_i(h,h_{i,t}^{\star})- R_t(h,h_{i,t}^{\star})|\leq 2K W_1(\D_i,\D_j) $$
$$|R_t(h) - R_t(h,h_{i,t}^{\star})| \leq  R_t(h_{i,t}^{\star})$$
Plugging in, we have:
\begin{equation}
    \leq \sum_{i=1}^T \balpha_{t,i}(R_i(h^{\star}_{i,t}) + R_t(h_{i,t}^{\star})  + 2K W_1(D_i,D_j)) = \sum_{i=1}^T  \balpha_{t,i}(\lambda_{t,i} +  2K W_1(D_i,D_j))
\end{equation}
\begin{equation}
= \sum_{i=1}^T \balpha_{t,i}\lambda_{t,i} + 2K \sum_{i=1}^T \balpha_{t,i} W_1(D_i,D_j)
\end{equation}
Summing over the $t=1,\dots,T$:
\begin{equation}
     \frac{1}{T}\sum_{t=1}^T R_t(h_t) \leq \frac{1}{T}\sum_{t=1}^T R_{\balpha_t}(h_t) + \frac{2K}{T}\sum_{t=1}^T \sum_{i=1}^T \balpha_{t,i} W_1 (D_t,D_i) + \frac{1}{T}\sum_{t=1}^T\sum_{i=1}^T \balpha_{t,i}\lambda_{t,i}
\end{equation}

\paragraph{Step 2:} The next step is to bound the expected and the empirical Wasserstein distance. 
According to the triangle inequality of Wasserstein distance, we have:
\begin{equation}
    W_1(D_t,D_i) \leq  W_1(D_t,\hat{\D}_t) + W_1(\hat{\D}_t,\D_i)
   \leq W_1(D_t,\hat{\D}_t) + W_1(\hat{\D}_t,\hat{\D}_i) + W_1(\hat{\D}_i,\D_i) 
\end{equation}

According to the concentration lemma, we have with probability $1-\delta^{\prime}/2$:
\begin{equation}
         W_1(D_t,D_i) \leq  W_1(\hat{\D}_t,\hat{\D}_i) + C_t m_t^{-1/s} + C_i m_i^{-1/s} 
        + \sqrt{\frac{1}{2}\log(\frac{2}{\delta^{\prime}})}(\sqrt{\frac{1}{m_t}} + \sqrt{\frac{1}{m_i}}
\end{equation}

Then setting $\delta^{\prime} = \frac{\delta}{T^2}$ and applying union bound, we have the following with probability at least $1-\delta/2$:
\begin{equation}
     \frac{1}{T}\sum_{t=1}^T \sum_{i=1}^T \balpha_{t,i} W_1 (D_t,D_i) \leq \frac{1}{T}\sum_{t=1}^T \sum_{i=1}^T \balpha_{t,i} W_1 (\hat{D}_t,\hat{D}_i) + \frac{1}{T} \sum_{t=1}^T \sum_{i=1}^T \gamma_{t,i}
\end{equation}
Where $$\gamma_{t,i} =  C_t N_t^{-1/s} + C_i N_i^{-1/s} +  \sqrt{\log(\frac{2T}{\delta})}(\sqrt{\frac{1}{N_t}} + \sqrt{\frac{1}{N_i}})$$

\paragraph{Step 3:} 
Then the next step is to bound the empirical and expected error. Since here is the regression problem, we suppose the hypothesis family $\mathcal{H}$ is a set of continuous mapping with \emph{pseudo-dimension} $d$. Then we combine with the previous lemma, with probability at least $1-\delta$, the expected error can be upper bounded by:

\begin{equation}
\frac{1}{T}\sum_{t=1}^T R_t(h_t) \leq \frac{1}{T}\sum_{t=1}^T \hat{R}_{\balpha_t}(h_t) +  C_1  \sum_{t=1}^T \Big( \sqrt{ \sum_{j=1}^T \frac{\balpha^2_{t,j}}{\beta_j}} \Big) 
 +  \frac{2K}{T}\sum_{t=1}^T \sum_{i=1}^T \balpha_{t,i} W_1 (\hat{D}_t,\hat{D}_i) + C_2 +\frac{1}{T}\sum_{t=1}^T\sum_{i=1}^T \balpha_{t,i}\lambda_{t,i}
\label{wbound}
\end{equation}
Where $C_1 = 2 \sqrt{ \frac{2(d\log(\frac{2en}{d})+\log(\frac{16T}{\delta}))}{n}}$, $C_2 = \frac{2K}{T} \sum_{t=1}^T \sum_{i=1}^T \gamma_{t,i}$
with $\gamma_{t,i} = C_t m_t^{-1/s} + C_i m_i^{-1/s} +  \sqrt{\log(\frac{2T}{\delta})}(\sqrt{\frac{1}{m_t}} + \sqrt{\frac{1}{m_i}})$ 

\paragraph{Remark} The bound proposed in (\ref{wbound}) is analogue to the bound in the $\mathcal{H}$ divergence measure with completely different assumption. For example, the Wasserstein bound is derived on the real value output function, which can be naturally applied in the regression problem. 

\section*{Experiment details}
\subsection*{Digits recognition}
In the digit recognition, we used three different kinds of digits: Mnist \cite{lecun1998gradient}, MnistM \cite{arbelaez2011contour,ganin2016domain} and SVHN \cite{netzer2011reading}. As we described in the the paper, we only sample 3K, 5K and 8K examples for each task. The input image dimension is $28\times28$.

We used a modified LeNet-5 architecture for training the digit datasets.

\begin{itemize}
    \item Feature extractor: with 2 convolution layers. 
    
    'layer1': 'conv': [1, 32, 5, 1, 2], 'relu': [], 'maxpool': [3, 2, 0],
    
    'layer2': 'conv': [32, 64, 5, 1, 2], 'relu': [], 'maxpool': [3, 2, 0]
    
    \item Task prediction: with 2 fc layers.
    
     'layer3': 'fc': [*, 128], 'act\_fn': 'elu',
    
    'layer4': 'fc': [128, 10], 'act\_fn': 'softmax'
    
    \item Discriminator part:  with 2 fc layers.
    
    \emph{reverse\_gradient}()
    
    'layer3': 'fc': [*, 128], 'act\_fn': 'elu',
    
    'layer4': 'fc': [128, 10], 'act\_fn': 'softmax'

\end{itemize}

\paragraph{Hyper-parameter setting}
We set the $\rho= \frac{1}{T}$, with $T$ the number of the task. $\kappa_1 = 1$ and tuning the hyper-parameter $\kappa_2$ from $0.2$ to $2$ through grid search. In the Wasserstein-1 distance based approach, we set the gradient penalty weight as $1$.

As for the configurations for training the neural networks, we used \emph{SGD} optimizer with learning rate $0.01$ and momentum $0.9$. The maximum training epoch is $100$ for the proposed approach and baselines.

\subsection*{Amazon reviews}
We also evaluate the proposed algorithm in \emph{Amazon reviews} datasets. We extract reviews from four kinds of product (book, dvd disks, electronics and kitchen appliances). Reviews datasets are pre-processed with the same strategy from \cite{ganin2016domain}:  10K dimensional input features and binary output labels $\{0,1\}$, "0" if the product is ranked less equal than 3 stars, and "1" if higher than 3 stars. For each task we have $1000$ and $1600$ labelled training examples, respectively. 

We used the standard MLP architecture for training the pre-processed dataset.

\begin{itemize}
    \item Feature extractor: with 2 fc layers. 
    
    'layer1': 'fc': [10000, 256], 'act\_fn': 'elu',
    
    'layer2': 'fc': [256,128], 'act\_fn': 'elu', 
    
    \item Task prediction: with 2 fc layers.
    
     'layer3': 'fc': [128, 64], 'act\_fn': 'elu',
    
    'layer4': 'fc': [64, 1], 'act\_fn': 'sigmoid'
    
    \item Discriminator part:  with 2 fc layers.
    
    \emph{reverse\_gradient}()
    
    'layer3': 'fc': [128, 64], 'act\_fn': 'elu',
    
    'layer4': 'fc': [64, 1], 'act\_fn': 'sigmoid'
\end{itemize}

\paragraph{Hyper-parameter tuning}
We set the $\rho= \frac{1}{T}$, with $T$ the number of the task. $\kappa_1 = 1$ and tuning the hyper-parameter $\kappa_2$ from $0.2$ to $1$ through grid search. In the Wasserstein-1 distance based approach, we set the gradient penalty weight as $1$.

As for the configurations for training the neural networks, we used \emph{SGD} optimizer with learning rate $0.005$ and momentum $0.9$. The maximum training epoch is $100$ for the proposed approach and baselines.

\bibliographystyle{named}
\bibliography{reference}

\end{document}